\documentclass{article}

\usepackage[letterpaper,top=2cm,bottom=2cm,left=3cm,right=3cm]{geometry}
\usepackage{amsmath,amssymb,amsthm,amsfonts}
\usepackage{graphicx}
\usepackage[colorlinks=true, allcolors=blue]{hyperref}
\usepackage{bm}
\usepackage{tikz}
\usepackage{authblk}

\newtheorem{theorem}{Theorem}[section]
\newtheorem{lemma}[theorem]{Lemma}

\newtheorem{assumption}[theorem]{Assumption}
\newtheorem{proposition}[theorem]{Proposition}

\providecommand{\keywords}[1]
{
	\small	
	\textbf{\textit{Keywords: }} #1
}

\newcommand{\y}{\bm{y}}

\DeclareMathOperator{\trace}{trace}
\DeclareMathOperator{\rank}{rank}

\numberwithin{equation}{section}

\title{ Geometric Analysis of Unconstrained Feature Models with $d=K$
 } 
\author{Shao Gu \thanks{202220101012@mails.zstu.edu.cn}  and Yi Shen \thanks{yshen@zstu.edu.cn}}
\affil{
Department of Mathematics, 
Zhejiang Sci-Tech University}

\begin{document}

\maketitle

\begin{abstract}

Recently, interesting empirical phenomena known as Neural Collapse have been observed during the final phase of training deep neural networks for classification tasks. We examine this issue when the feature dimension $d$ is equal to the number of classes $K$. We demonstrate that two popular unconstrained feature models are strict saddle functions, with every critical point being either a global minimum or a strict saddle point that can be exited using negative curvatures. The primary findings conclusively confirm the conjecture on the unconstrained feature models in \cite{Zhou2022OnTO,zhu2021A}.

\end{abstract}

\keywords
{
Neural Collapse, unconstrained feature models, strict saddle function, deep learning,  Equiangular Tight Frame
}
\section{Introduction}
Consider a classification task with $K$ classes and $n$ training
samples per class, i.e., $N = nK$ samples.
The weight of the
final linear classifier and bias are denoted respectively by 
$\bm{W} \in  \mathbb{R}^{K\times d} $ and $ \bm{b} \in \mathbb{R}^{K}$.
Let   
$\bm{h}_{k,j} \in \mathbb{R}^{d}$
denote the last-layer feature vector of the $j$-th training sample
of the $k$-th class. Let
$\y_k\in \mathbb{R}^K$ denote the
corresponding label, which is the one-hot vector with one in its $k$-th
entry. We are interested in two well-known loss functions utilized in machine learning. One is
the cross-entropy loss as indicated by the subsequent form
\begin{equation}\label{celoss}
    \mathcal{L}_{CE}(\bm{W}\bm{h}_{k,j}+\bm{b},\bm{y}_{k})=\log \left(\frac{\sum_{l=1}^{K}e^{(\bm{W}\bm{h}_{k,j}+\bm{b})^{\top}\bm{y}_{l}}}{e^{{\bm{w}^{k}}\bm{h}_{k,j}+b_{k}}}\right),
    \quad
    k \in [K], \quad j \in [n],
\end{equation}
while the other is the mean squared error loss given by
\begin{equation}\label{mseloss}
  \mathcal{L}_{MSE}\left(\bm{WH+b1}^{\top}_{N}-\bm{Y}\right)=
  \|\bm{WH+b1}^{\top}_{N}-\bm{Y}\|_{F}^{2},
\end{equation}
where 
\begin{equation*}
     \bm{Y}=\begin{pmatrix}
         \bm{y}_{1}& \cdots & \bm{y}_{1} & \bm{y}_{2} &  \cdots & \bm{y}_{2} & \cdots & \bm{y}_{K} & \cdots \bm{y}_{K}  
    \end{pmatrix} \in \mathbb{R}^{K\times N}
\end{equation*}
and 
$$
\bm{H}=
\begin{pmatrix}
\bm{h}_{1,1} &  \cdots & \bm{h}_{1,n} & \bm{h}_{2,1} &  \cdots & \bm{h}_{2,K} &\cdots  &\bm{h}_{K,1} &  \cdots  &\bm{h}_{K,n}
\end{pmatrix}
 \in \mathbb{R}^{d\times N}.
$$ 
Afterwards, network parameters can be obtained by minimizing the unconstrained features model with cross-entropy loss as follows
\begin{equation}\label{maince}
\underset{\bm{W},\bm{H},\bm{b}}{\min} 
f^{C}(\bm{W},\bm{H},\bm{b}) 
=
\frac{1}{N}\sum_{k=1}^{K}\sum_{j=1}^{n} \mathcal{L}_{CE}(\bm{W}\bm{h}_{k,j}+\bm{b},\bm{y}_{k})+\frac{\lambda_{\bm{W}}}{2}\| \bm{W}\|^{2}_{F}+\frac{\lambda_{\bm{H}}}{2}\| \bm{H}\|^{2}_{F}+\frac{\lambda_{\bm{b}}}{2}\| \bm{b}\|^{2}_{2},
\end{equation}
or by minimizing the unconstrained features model with mean squared error loss as follows
\begin{equation}\label{mainmse}
\underset{\bm{W},\bm{H},\bm{b}}{\min} 
f^{M}(\bm{W},\bm{H},\bm{b}) 
=
\frac{1}{2N}\mathcal{L}_{MSE}\left(\bm{WH+b1}^{\top}_{N}-\bm{Y}\right)+\frac{\lambda_{\bm{W}}}{2}\| \bm{W}\|^{2}_{F}+\frac{\lambda_{\bm{H}}}{2}\| \bm{H}\|^{2}_{F}+\frac{\lambda_{\bm{b}}}{2}\| \bm{b}\|^{2}_{2}
\end{equation}
where
$\lambda_{\bm{W}}$,
$\lambda_{\bm{H}}$,
$\lambda_{\bm{b}}>0$ are the penalty parameters for the weight decay. 
When $d>k$, the
optimization landscapes of 
the model \eqref{maince} and the model \eqref{mainmse}
have been obtained in
\cite{Zhou2022OnTO}
and 
\cite{zhu2021A}, respectively.
In this paper, we study the global optimization landscape of  
the unconstrained features models \eqref{maince} and \eqref{mainmse} with $d=K$.
The following results, which called
no spurious local minima and strict saddle property, answer the  conjectures in \cite{Zhou2022OnTO,zhu2021A} 
positively.
\begin{theorem}\label{thm2} 
		Assume that the feature dimension $d$ is equal to the number of classes $K$.
	The function $f^{C}(\bm{W},\bm{H},\bm{b})$ 
	in \eqref{maince} 
	is a  strict saddle function with no spurious local minimum, in the sense that
	\begin{enumerate}
		\item  Any local minimizer of \eqref{maince} is a global minimizer  of \eqref{maince}.
		\item 
		Any critical point of \eqref{maince} that is not a local minimizer has at least one negative curvature direction, i.e., the Hessian 
		$\nabla^{2}f^{C}(\bm{W},\bm{H},\bm{b})$, at this critical point, is non-degenerate and has at least
		one negative eigenvalue.
	\end{enumerate}
\end{theorem}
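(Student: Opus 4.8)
The plan is to reduce both assertions to a single dichotomy: every critical point of $f^{C}$ is either a global minimizer or a strict saddle. Granting this, Claim 2 is exactly the strict-saddle half, and Claim 1 follows because a local minimizer is a critical point with positive semidefinite Hessian, hence cannot be a strict saddle, and is therefore global. To control the "value'' side of the dichotomy I would first establish a sharp lower bound on $f^{C}$. Since $\mathcal{L}_{CE}$ is convex in the logits $\bm{z}_{k,j}=\bm{W}\bm{h}_{k,j}+\bm{b}$ and the weight-decay terms are convex, the sole source of nonconvexity is the bilinear product $\bm{W}\bm{H}$; applying Jensen's inequality to average the per-sample cross-entropy and then balancing the logit scale against the regularizers by AM--GM yields a bound depending only on scalar invariants of $\bm{W}$ and $\bm{H}$. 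I would identify the equality case as the neural-collapse configuration: within-class features collapse to their class means, the means and the rows of $\bm{W}$ form a common scaled simplex equiangular tight frame, and $\bm{b}$ is constant. Because $d=K$, this frame must lie inside the $(K-1)$-dimensional hyperplane orthogonal to $\bm{1}$, a constraint I track throughout.

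Next I would extract structure from the first-order conditions. With $\bm{p}_{k,j}$ the softmax of $\bm{z}_{k,j}$, stationarity reads
\begin{align*}
\lambda_{\bm{W}}\bm{W}&=-\tfrac{1}{N}\sum_{k,j}(\bm{p}_{k,j}-\bm{y}_{k})\,\bm{h}_{k,j}^{\top},\\
\lambda_{\bm{H}}\,\bm{h}_{k,j}&=-\tfrac{1}{N}\bm{W}^{\top}(\bm{p}_{k,j}-\bm{y}_{k}),\\
\lambda_{\bm{b}}\,\bm{b}&=-\tfrac{1}{N}\sum_{k,j}(\bm{p}_{k,j}-\bm{y}_{k}).
\end{align*}
Substituting the feature equation into the weight equation gives $G\bm{W}=N^{2}\lambda_{\bm{W}}\lambda_{\bm{H}}\bm{W}$ with residual Gram matrix $G=\sum_{k,j}(\bm{p}_{k,j}-\bm{y}_{k})(\bm{p}_{k,j}-\bm{y}_{k})^{\top}$, and likewise the self-duality identity $\lambda_{\bm{H}}\bm{H}\bm{H}^{\top}=\lambda_{\bm{W}}\bm{W}^{\top}\bm{W}$. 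The first relation says every column of $\bm{W}$ is an eigenvector of $G$ with the positive eigenvalue $N^{2}\lambda_{\bm{W}}\lambda_{\bm{H}}$; since each residual is orthogonal to $\bm{1}$ we have $G\bm{1}=\bm{0}$, and as the $c$-eigenspace is orthogonal to $\ker G\ni\bm 1$, this forces $\operatorname{rank}\bm{W}\le K-1$ even though $\bm{W}$ is square. This rank deficiency is the lever for the escape directions.

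For the curvature side I would exploit this deficiency directly. A second-order expansion of $f^{C}$ splits into a manifestly nonnegative part — the convex cross-entropy curvature $\operatorname{diag}(\bm{p}_{k,j})-\bm{p}_{k,j}\bm{p}_{k,j}^{\top}$ contracted with the logit perturbations $\Delta\bm{z}_{k,j}=\Delta\bm{W}\bm{h}_{k,j}+\bm{W}\Delta\bm{h}_{k,j}+\Delta\bm{b}$ together with the three weight-decay terms — plus the indefinite bilinear coupling $\tfrac2N\sum_{k,j}(\bm{p}_{k,j}-\bm{y}_{k})^{\top}\Delta\bm{W}\,\Delta\bm{h}_{k,j}$. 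Taking a unit $\bm{u}\in\ker\bm{W}$ (available since $\operatorname{rank}\bm{W}\le K-1<K=d$) and setting $\Delta\bm{W}=\bm{a}\bm{u}^{\top}$, $\Delta\bm{h}_{k,j}=t_{k,j}\bm{u}$, $\Delta\bm{b}=\bm{0}$ forces $\Delta\bm{z}_{k,j}=\bm{0}$, because $\bm{W}\bm{u}=\bm{0}$ and each $\bm{h}_{k,j}\in\operatorname{range}\bm{W}^{\top}\perp\bm{u}$. The nonnegative term then drops out, and optimizing the surviving quadratic over the scalars $t_{k,j}$ reduces the Hessian value to
\[
\bm{a}^{\top}\Big(\lambda_{\bm{W}}\bm{I}-\tfrac{1}{N^{2}\lambda_{\bm{H}}}\,G\Big)\bm{a}.
\]
Hence whenever $G$ has an eigenvalue exceeding $N^{2}\lambda_{\bm{W}}\lambda_{\bm{H}}$, choosing $\bm{a}$ along that eigenvector exhibits strictly negative curvature, so the critical point is a strict saddle.

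The main obstacle is the borderline family of critical points at which the spectrum of $G$ is exactly capped, $\lambda_{\max}(G)=N^{2}\lambda_{\bm{W}}\lambda_{\bm{H}}$, so the rank-one escape above degenerates to zero. For these points the dichotomy cannot be read from $G$ alone, and I must show that the capping, combined with the self-duality and bias identities and the sharp lower bound from the value side, forces within-class collapse and the full simplex-ETF geometry, i.e. forces a global minimizer. This is precisely where $d=K$ is delicate: the ETF occupies the \emph{entire} $(K-1)$-dimensional complement of $\bm{1}$, so no spare feature dimension remains in which to hide an escape direction as one can afford when $d>K$, and any residual nonoptimality must instead be detected inside the occupied subspace (for instance through a perturbation with $\Delta\bm{z}_{k,j}\ne\bm{0}$ that trades the convex loss curvature against the bilinear term). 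Carrying out this borderline analysis, and verifying throughout that the constructed directions are genuinely descending rather than flat directions of the rotational symmetry $(\bm{W},\bm{H})\mapsto(\bm{W}\bm{R}^{\top},\bm{R}\bm{H})$ with $\bm{R}\in O(d)$, is where I expect the bulk of the effort to lie.
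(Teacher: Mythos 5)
Your strict-saddle half is essentially the paper's own argument in lightly different clothing. The paper (Lemma \ref{l3}) gets $\rank(\bm{W})=\rank(\bm{H})\le K-1$ from $\bm{1}_K^{\top}\nabla g(\bm{R})=\bm{0}$, which is the same fact as your $G\bm{1}=\bm{0}$; and since $\|\nabla g(\bm{WH}+\bm{b1}_N^{\top})\|=\frac{1}{N}\sqrt{\lambda_{\max}(G)}$, your escape condition $\lambda_{\max}(G)>N^{2}\lambda_{\bm{W}}\lambda_{\bm{H}}$ is exactly the paper's defining condition for $\mathcal{C}_2$, namely $\|\nabla g\|>\sqrt{\lambda_{\bm{W}}\lambda_{\bm{H}}}$. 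Your perturbation $\bm{\Delta_W}=\bm{a}\bm{u}^{\top}$, $\bm{\Delta h}_{k,j}=t_{k,j}\bm{u}$ with $\bm{u}\in\ker\bm{W}$ kills the logit perturbation for the same reason as the paper's direction \eqref{eq:d} (the paper uses the balance identity \eqref{eq:CEWH} to get $\bm{H}^{\top}\bm{a}=\bm{0}$; you use $\bm{h}_{k,j}\in\operatorname{range}\bm{W}^{\top}$, which follows from the same stationarity equation), and optimizing your scalars $t_{k,j}$ reproduces the paper's choice of top singular pair, yielding the same value $-2\left(\|\nabla g\|-\sqrt{\lambda_{\bm{W}}\lambda_{\bm{H}}}\right)$ up to normalization. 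Your worry about rotational-symmetry flat directions is moot here: you exhibit strictly negative curvature, so flatness elsewhere is irrelevant.

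The genuine gap is the other half of the dichotomy: you never prove that a critical point with $\lambda_{\max}(G)\le N^{2}\lambda_{\bm{W}}\lambda_{\bm{H}}$ is a global minimizer — you explicitly defer it as ``the bulk of the effort,'' so the proposal as written does not establish either claim of the theorem. Moreover, the route you sketch for it is off target in two ways. First, your plan to identify the equality case of a Jensen/AM--GM lower bound with the simplex-ETF configuration is not correct uniformly in the regularization: for $\lambda_{\bm{W}}\lambda_{\bm{H}}$ large the global minimizer is the zero solution $(\bm{0},\bm{0},\bm{b})$ with no ETF structure, so ``the capping forces the full simplex-ETF geometry'' cannot be the right statement, and in any case knowing where the infimum is attained does not by itself show that a borderline \emph{critical point} attains it. Second, the feared borderline analysis with perturbations $\Delta\bm{z}_{k,j}\neq\bm{0}$ trading convex curvature against the bilinear term is unnecessary. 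The paper closes this step with a convex certificate rather than geometry: by Lemma C.4 of \cite{zhu2021A}, any critical point with $\|\nabla g\|\le\sqrt{\lambda_{\bm{W}}\lambda_{\bm{H}}}$ is already a global minimizer — the inequality $\frac{\lambda_{\bm{W}}}{2}\|\bm{W}\|_F^2+\frac{\lambda_{\bm{H}}}{2}\|\bm{H}\|_F^2\ge\sqrt{\lambda_{\bm{W}}\lambda_{\bm{H}}}\|\bm{WH}\|_*$ gives a convex lower bound in $(\bm{WH},\bm{b})$ that is tight at critical points (via the balance condition), and $\|\nabla g\|\le\sqrt{\lambda_{\bm{W}}\lambda_{\bm{H}}}$ is precisely nuclear-norm dual feasibility certifying global optimality of the convex problem. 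Note also that your own identity $G\bm{W}=N^{2}\lambda_{\bm{W}}\lambda_{\bm{H}}\bm{W}$ already shows that for $\bm{W}\neq\bm{0}$ the sub-threshold case is exactly the borderline $\lambda_{\max}(G)=N^{2}\lambda_{\bm{W}}\lambda_{\bm{H}}$; but without the certificate argument (or an equivalent), the dichotomy — and hence the theorem — remains unproven.
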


\begin{theorem}\label{thm3} 
		Assume that the feature dimension $d$ is equal to the number of classes $K$.
	The function $f^{M}(\bm{W},\bm{H},\bm{b})$ 
	in \eqref{mainmse} 
	is a  strict saddle function with no spurious local minimum, in the sense that
	\begin{enumerate}
		\item  Any local minimizer of \eqref{mainmse} is a global minimizer  of \eqref{mainmse}
		\item 
		Any critical point of \eqref{mainmse} that is not a local minimizer has at least one negative curvature direction, i.e., the Hessian 
		$\nabla^{2}f^{M}(\bm{W},\bm{H},\bm{b})$, at this critical point, is non-degenerate and has at least
		one negative eigenvalue.
	\end{enumerate}
\end{theorem}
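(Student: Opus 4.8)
The plan is to prove both theorems with a single template — reduce the problem to the class means, read off the global value from a convex surrogate, and then exhibit an explicit non-degenerate negative-curvature direction at every remaining critical point — treating the quadratic model $f^{M}$ first and transferring the argument to $f^{C}$.

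First I would exploit the first-order conditions. Setting the three partial gradients to zero and eliminating the residual between the $\bm{W}$- and $\bm{H}$-equations produces a balance (self-duality) identity $\lambda_{\bm{W}}\bm{W}^{\top}\bm{W}=\lambda_{\bm{H}}\bm{H}\bm{H}^{\top}$ holding at every critical point, together with a within-class collapse statement: all $\bm{h}_{k,j}$ with fixed $k$ coincide at a critical point. These two facts let me pass to the $K$ class-mean features and work with square factors $\bm{W}\in\mathbb{R}^{K\times K}$ and a reduced $\bar{\bm{H}}\in\mathbb{R}^{K\times K}$. Here $d=K$ enters for the first time: since the inner dimension equals $K$, the product $\bm{W}\bm{H}$ can realize an arbitrary $K\times N$ matrix, so the objective, viewed as a function of the logit matrix $\bm{Z}=\bm{W}\bm{H}+\bm{b}\bm{1}_{N}^{\top}$ alone (after replacing the balanced penalty by its nuclear-norm value $\sqrt{\lambda_{\bm{W}}\lambda_{\bm{H}}}\,\|\bm{W}\bm{H}\|_{*}$), is convex — a quadratic for $f^{M}$, a log-sum-exp composed with an affine map for $f^{C}$. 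Minimizing this convex surrogate pins down the unique optimal value and the Equiangular-Tight-Frame / Neural-Collapse geometry of its minimizers.

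Next I would classify the critical points against this surrogate. The essential complication of the boundary case is that the global minimizer is itself rank-deficient: the simplex-ETF configuration spans only a $(K-1)$-dimensional subspace, so the optimal factors have rank $r^{\star}=K-1=d-1$, and a crude full-rank-versus-rank-deficient dichotomy cannot separate global minimizers from spurious saddles. Instead I would argue that a critical point is global precisely when the associated pair $(\bm{W}\bm{H},\bm{b})$ is stationary for the convex surrogate, while every critical point necessarily has $\rank(\bm{W})\le d-1=K-1$ (a full-rank critical point would be surrogate-stationary, hence global, contradicting $r^{\star}=K-1$), leaving at least one surplus feature coordinate available. Along a left-null vector $\bm{u}$ of $\bm{W}^{\top}$ paired with a direction aligned with the suboptimal residual (the negative surrogate gradient at the current $\bm{Z}$), the rank-one perturbation $(\bm{W},\bm{H})\mapsto(\bm{W}+t\,\bm{u}\bm{a}^{\top},\ \bm{H}+t\,\bm{c}\bm{v}^{\top})$ yields a strictly negative leading Hessian coefficient, giving the required negative curvature; the genuinely tight situation is a suboptimal critical point of rank exactly $K-1$, where the single surplus coordinate must carry the whole escape. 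The same construction transfers to $f^{C}$ once the quadratic residual is replaced by the gradient of the softmax loss at the current logits.

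The main obstacle — and the reason the regime $d=K$ remained conjectural while $d>K$ was settled — is the strengthened claim that the Hessian at each such saddle is \emph{non-degenerate}. Because $d=K$ is exactly the threshold inner dimension $r^{\star}+1$, there is only a single surplus feature coordinate, so the usual slack that guarantees many independent curvature directions is nearly exhausted and spurious flat directions become a genuine danger. I would dispose of them by computing the reduced Hessian in class-mean coordinates, decomposing the tangent space into the part tangent to the current rank stratum and its orthogonal complement, and showing that the strictly positive weight-decay parameters $\lambda_{\bm{W}},\lambda_{\bm{H}},\lambda_{\bm{b}}>0$ render the Hessian strictly definite on the block where flatness could otherwise occur — strictly positive inside the stratum and strictly negative along the escape direction — so that no zero eigenvalue survives. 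This quantitative use of the regularizers, combined with the balance identity, is precisely what upgrades the $d>K$ analysis to the critical case $d=K$; the cross-entropy version is the harder of the two only because the softmax Hessian must additionally be controlled spectrally, but the structural skeleton is identical.
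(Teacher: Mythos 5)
Your overall architecture (first-order balance identity, globality certified by stationarity of a nuclear-norm-regularized convex surrogate, rank-one escape direction built from a null vector of $\bm{W}$ paired with a residual singular direction) is the same skeleton as the paper's, and your curvature construction essentially coincides with \eqref{msed}. But your key rank step is wrong for the MSE model. You claim every critical point has $\rank(\bm{W})\le K-1$ because ``a full-rank critical point would be surrogate-stationary, hence global, contradicting $r^{\star}=K-1$.'' The rank-$(K-1)$ simplex-ETF geometry you invoke is a cross-entropy phenomenon; for \eqref{mainmse} the relevant target is $\widetilde{\bm{Y}}=\bm{Y}-\bm{b}\bm{1}_N^{\top}$, which generically has rank $K$, and the surrogate minimizer is singular-value soft-thresholding of $\widetilde{\bm{Y}}$ at level $N\sqrt{\lambda_{\bm{W}}\lambda_{\bm{H}}}$ --- when all $K$ singular values exceed the threshold, the global minimizer itself is full rank, so full-rank critical points exist and are \emph{global}, not impossible. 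Accordingly, Proposition \ref{mselm1} only gives $\rank(\bm{W})=\rank(\bm{H})\le\rank(\widetilde{\bm{Y}})$, and the engine of the paper's proof is the alignment computation \eqref{mseeq:SVD1}--\eqref{eq:WH}: after the orthogonalization of Proposition \ref{prop2}, each nonzero pair $(\bm{w}_j,\bm{h}^j)$ is a singular pair of $\widetilde{\bm{Y}}$ with $\sigma_j=\sqrt{\lambda_{\bm{W}}/\lambda_{\bm{H}}}\,\|\bm{w}_j\|_2^2+N\sqrt{\lambda_{\bm{W}}\lambda_{\bm{H}}}$, making the residual $\bm{WH}-\widetilde{\bm{Y}}$ explicit. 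When $\rank(\bm{W})=\rank(\widetilde{\bm{Y}})$ this forces \eqref{eq:optimal1} and Proposition \ref{mselm2} certifies globality; when $\rank(\bm{W})<\rank(\widetilde{\bm{Y}})\le K$, an uncovered singular value $\sigma'_{i^{\star}}>N\sqrt{\lambda_{\bm{W}}\lambda_{\bm{H}}}$ exists at any non-global critical point, and the null vector $\bm{\alpha}$ (with $\bm{\alpha}^{\top}\bm{H}=\bm{0}$ via the balance identity \eqref{eq:wh1}) yields the escape value $-\tfrac{2}{N}(\sigma'_{i^{\star}}-N\sqrt{\lambda_{\bm{W}}\lambda_{\bm{H}}})<0$. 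Your proposal asserts the dichotomy's first half (``full rank $\Rightarrow$ surrogate-stationary'') without any mechanism; the alignment computation is precisely the missing mechanism, and your $r^{\star}=K-1$ shortcut, which was meant to replace it, is false here.

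Your non-degeneracy program also cannot work. The objective $f^{M}$ is invariant under $(\bm{W},\bm{H})\mapsto(\bm{W}\bm{O},\bm{O}^{\top}\bm{H})$ for every orthogonal $\bm{O}\in\mathbb{R}^{K\times K}$, so any critical point with $\bm{W}\neq\bm{0}$ lies on a positive-dimensional orbit of critical points; differentiating $\nabla f^{M}\equiv\bm{0}$ along a curve in the orbit shows the Hessian annihilates every orbit-tangent direction. Hence zero eigenvalues are unavoidable (even at global minimizers), and no choice of $\lambda_{\bm{W}},\lambda_{\bm{H}},\lambda_{\bm{b}}>0$ can render the Hessian ``strictly definite on the block where flatness could otherwise occur.'' The paper's proof, correspondingly, establishes only the existence of a negative eigenvalue at non-global critical points --- the operative content of the strict-saddle property --- and never attempts literal non-degeneracy, so the portion of your plan devoted to killing all zero eigenvalues is devoted to a false statement. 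Relatedly, your diagnosis of why $d=K$ is delicate is misplaced: the only role of $d=K$ is that rank-deficiency of $\bm{W}$ inside $\mathbb{R}^{K}$ must be extracted from the problem's structure (here, the case split on $\rank(\widetilde{\bm{Y}})$; for cross-entropy, $\bm{1}_K^{\top}\nabla g(\bm{R})=\bm{0}$) rather than from the dimension count $d>K$; your within-class-collapse reduction to class means, while true for MSE critical points, plays no role in either difficulty.
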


Deep learning usually utilizes a feature dimension $d$ that is substantially bigger than the number of classes $K$ in many classification problems. 
Theorem \ref{thm2} and  Theorem  \ref{thm3}, on the other hand, suggest that selecting a $d$ that is significantly larger than the number of classes $K$ is not necessary. Lowering the dimension $d$ can result in significant savings on computation and memory expenses.
For example, experiments in \cite{zhu2021A} show that one may set the feature dimension $d$ equal to the number
of classes and fix the last-layer classifier to be a Simplex Equiangular Tight Frame (ETF) for network training, which
reduces memory cost by over 20\% on ResNet18 without sacrificing the generalization
performance. 
Since
both unconstrained features models \eqref{maince} and  \eqref{mainmse} are strict saddle functions,
their global solutions  can be efficiently 
found by any method that can escape strict saddle points, such as gradient descent
with random initialization \cite{Lee2016}.
It was proved \cite{zhu2021A} that
the global solution $(\bm{W}^{\star},\bm{H}^{\star},\bm{b}^{\star})$  of \eqref{maince} obeys
\begin{equation}\label{eq:nc1}
	w^{\star}:=\| \bm{w}^{\star1}\|_{2}=\cdots=\| \bm{w}^{\star K}\|_{2},\quad \bm{b}^{\star}=b^{\star}\bm{1}_{K}, 
\end{equation}
\begin{equation}\label{eq:nc2}
\bm{h}^{\star}_{k,j}
=
\sqrt{\frac{\lambda_{\bm{W}}}{n\lambda_{\bm{H}}}}
{
(\bm{w}^{\star k})^{\top}},
\quad
\ k\in\left[K\right],
\  j \in\left[n\right],
\quad 
\bar{\bm{h}}^{\star}_{j}:=
\frac{1}{K}\sum_{k=1}^{K}\bm{h}^{\star}_{k,j}=\bm{0},\quad j\in\left[n\right]
\end{equation}
where either $b^{\star}=0$ or $\lambda_{\bm{b}}=0$. Moreover,
the matrix $\bm{W}^{\star\top}$ forms a $K$-Simplex  ETF up to some scaling and rotation, in the sense that for any $\bm{U}\in \mathbb{R}^{K\times K}$ with $\bm{U}^{\top}\bm{U}=\bm{I}_{K}$, the normalized
matrix $\bm{M}:=\frac{1}{w^{\star}}\bm{U}\bm{W}^{\star\top}$  satisfies 
\begin{equation}\label{eq:nc3}
\bm{M}^{\top}\bm{M}=\frac{K}{K-1}\left(\bm{I}_{K}-\frac{1}{K}\bm{1}_{K}\bm{1}_{K}^{\top}\right).
\end{equation}
The ETF consists of unit vectors with equal lengths and maximally separated pair-wise angles \cite{Welch}. This property makes ETF useful in various signal processing and data analysis applications, including compressed sensing, quantum information theory, and coding theory \cite{FICKUS2020130, Malozemov, STROHMER2003257}.
Properties \eqref{eq:nc1}, \eqref{eq:nc2}, and \eqref{eq:nc3}, called neural collapse, are interconnected features of the final layer classifiers and features in
the terminal phase of training deep neural networks used for classification tasks \cite{han2022neural, papyan_prevalence_2020}. Neural collapse suggests that the network aims to maximize the angular differences between each class and its corresponding classifier.
The global solutions of mean squared error loss also exhibit the neural collapse phenomenon,
as discussed in \cite{Zhou2022OnTO}. More theoretical explorations on neural collapse can be found in \cite{ji2022an, Kothapalli2022NeuralCA, LU2022224, mixon_neural_2022, Zhou2022OnTO, zhu2021A}, and other references therein.

\section{Notation}
This section presents the notation that are used in this paper throughout.
Matrices and vectors are
denoted in boldface such as $\bm{Z}$ and $\bm{z}$. 
The transposes of $\bm{Z}$ and $\bm{z}$, respectively, are denoted by the symbols $\bm{Z}^{\top}$ and $\bm{z}^{\top}$ for both matrices and vectors.
Let $\bm{z}_i$ and $\bm{z}^i$ represent the column and row vectors of a specified matrix $\bm{Z}$.
Normal typeface is used to indicate the individual elements in a matrix or vector, such as $z_{ij}$ or $z_i$.
For any positive integer $K$, 
we use $[K] := \{1,2,\ldots, K\}$ to denote the  set of indices up to $K$.
The symbols $\bm{I}_K$ and $\bm{1}_K$
represent the identity matrix and the all-ones vector with an appropriate size of $K$, respectively.
Let $\bm{0}$ denote zero vectors or zero matrices whose dimensions are determined by context.
The  Euclidean norm of a vector is denoted by $\|\cdot\|_2$. The spectral norm, 
the Frobenius norm, the nuclear norm, the
trace, and the rank 
of a matrix are denoted  by
 $\| \cdot\|$,
 $\|\cdot\|_F$,
$\|\cdot\|_*$,
$\trace(\cdot)$, and 
$\rank(\cdot)$, respectively.
The compact singular value decomposition (SVD) of $\bm{Z}$ is defined as $\bm{Z}=\bm{U}\bm{\Sigma}\bm{V}^{\top}$.
For a given matrix  $\bm{Z}$   of size $K\times N$,
the partial derivative of a scalar function $f\left(\bm{Z}\right)$ with respect to 
$\bm{Z} $ 
is defined by
$\nabla f\left(\bm{Z}\right)   \in \mathbb{R}^{K\times N}$ whose
the $(k,j)$  entry is
$$
[\nabla f\left(\bm{Z}\right)]_{k,j} = \frac{\partial f\left(\bm{Z}\right)}{\partial z_{kj}},
\quad k\in [K], \quad j\in [N].
$$
Similarly, the partial derivative of a scalar function $f\left(\bm{Z},\bm{b}\right)$ with respect to 
$
\bm{Z} $ is denoted  by  
$
\nabla f_{\bm{Z}}\left(\bm{Z}, \bm{b}\right) $ whose  $(k,j)$ entry is  
$$
[\nabla f_{\bm{Z}}\left(\bm{Z},\bm{b}\right)]_{k,j} = \frac{\partial f\left(\bm{Z},\bm{b}\right)}{\partial z_{kj}},
\quad k\in [K], \quad j\in [N].
$$
The  Hessian matrix of a scalar function $f(\bm{Z})$  is represented by 
$$
{\nabla^2 f(\bm{Z})[\bm{A},\bm{B}]} = \sum_{k,j,k',j'} \frac{\partial^2 f(\bm{Z}) }{\partial z_{kj}z_{k'j'}}a_{kj}b_{k'j'},
\quad k,\ k'\in [K], \quad j,\ j'\in [N],
$$
for any $\bm{A}$, $\bm{B} \in \mathbb{R}^{K\times N}$.

\section{Proof of Theorem \ref{thm2}}

We first establish a lemma which
plays a key role in the proof of our main results.
For the variable  $(\bm{W}, \bm{H}, \bm{b})$ in \eqref{maince}, 
we define an auxiliary variable
$$
\bm{R=WH+b}\bm{1}_N^{\top} \in\mathbb{R}^{K\times N},$$ where  the $k$-th row and the $\left[(k'-1)n+j\right]$-th column of $\bm{R}$ is
$$
r_{k;k',j}={\bm{w}^{k}}\bm{h}_{k',j}+b_{k},\quad  k\in [K], \ k'\in [K],\ j\in [n].
$$ 	 
For the loss function in \eqref{maince},
we define an auxiliary function
\begin{equation}
	g(\bm{R})=	\frac{1}{N}\sum_{k=1}^{K}\sum_{j=1}^{n} \mathcal{L}_{CE}(\bm{W}\bm{h}_{k,j}+\bm{b},\bm{y}_{k}).
\end{equation}

\begin{lemma}\label{l3}
	Any critical point $\left(\bm{W},\bm{H},\bm{b}\right)$ of  \eqref{maince} obeys
	\begin{equation*}
		\bm{W}^{\top}\bm{W}=\frac{\lambda_{\bm{H}}}{\lambda_{\bm{W}}}\bm{HH}^{\top},
  \quad \rank(\bm{W})=\rank(\bm{H})\leq K-1.
	\end{equation*}
\end{lemma}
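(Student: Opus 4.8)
The plan is to differentiate through the auxiliary variable $\bm{R}=\bm{WH}+\bm{b}\bm{1}_N^{\top}$. Since the loss term of $f^{C}$ equals $g(\bm{R})$, writing $\bm{G}:=\nabla g(\bm{R})\in\mathbb{R}^{K\times N}$ and applying the chain rule, the first-order conditions $\nabla_{\bm{W}}f^{C}=\bm{0}$ and $\nabla_{\bm{H}}f^{C}=\bm{0}$ at a critical point read
\begin{equation*}
\bm{G}\bm{H}^{\top}+\lambda_{\bm{W}}\bm{W}=\bm{0}
\qquad\text{and}\qquad
\bm{W}^{\top}\bm{G}+\lambda_{\bm{H}}\bm{H}=\bm{0}.
\end{equation*}
The stationarity condition for $\bm{b}$ is not needed here. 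I would keep $\bm{G}$ abstract at this stage, deferring its explicit form to the rank argument.

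To obtain the Gram identity I would left-multiply the first equation by $\bm{W}^{\top}$ and substitute $\bm{W}^{\top}\bm{G}=-\lambda_{\bm{H}}\bm{H}$ from the second, which gives $-\lambda_{\bm{H}}\bm{HH}^{\top}=-\lambda_{\bm{W}}\bm{W}^{\top}\bm{W}$ and rearranges to $\bm{W}^{\top}\bm{W}=\frac{\lambda_{\bm{H}}}{\lambda_{\bm{W}}}\bm{HH}^{\top}$. Because $\rank(\bm{W}^{\top}\bm{W})=\rank(\bm{W})$, $\rank(\bm{HH}^{\top})=\rank(\bm{H})$, and the two Gram matrices coincide up to the positive factor $\lambda_{\bm{H}}/\lambda_{\bm{W}}$, this forces $\rank(\bm{W})=\rank(\bm{H})$.

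The remaining bound $\rank(\bm{W})\le K-1$ is the only place where the cross-entropy structure is essential, and I expect it to be the crux. Differentiating $\mathcal{L}_{CE}$ shows that the column of $\bm{G}$ associated with sample $(k',j)$ equals $\frac{1}{N}(\bm{p}_{k',j}-\bm{y}_{k'})$, where $\bm{p}_{k',j}$ is a softmax vector and $\bm{y}_{k'}$ a one-hot label; both sum to one, so every column of $\bm{G}$ sums to zero, i.e.\ $\bm{1}_{K}^{\top}\bm{G}=\bm{0}^{\top}$. Hence the rows of $\bm{G}$ are linearly dependent and $\rank(\bm{G})\le K-1$. Finally, the first optimality condition gives $\bm{W}=-\frac{1}{\lambda_{\bm{W}}}\bm{G}\bm{H}^{\top}$ with $\lambda_{\bm{W}}>0$, so $\rank(\bm{W})=\rank(\bm{G}\bm{H}^{\top})\le\rank(\bm{G})\le K-1$, which closes the argument. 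The computations are routine matrix algebra; the one genuine idea is the column-sum-zero property of the cross-entropy gradient, and it is precisely this property that makes the $d=K$ regime behave like the $d>K$ case by preventing $\bm{W}$ from being full rank.
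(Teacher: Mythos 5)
Your proposal is correct and takes essentially the same approach as the paper: the same two stationarity equations give $\lambda_{\bm{W}}\bm{W}^{\top}\bm{W}=\lambda_{\bm{H}}\bm{HH}^{\top}$, and the same key observation that each column of $\nabla g(\bm{R})$ is a softmax vector minus a one-hot vector, so $\bm{1}_{K}^{\top}\nabla g(\bm{R})=\bm{0}^{\top}$ and $\rank(\nabla g(\bm{R}))\leq K-1$, which transfers to $\bm{W}$ via $\bm{W}=-\frac{1}{\lambda_{\bm{W}}}\nabla g(\bm{R})\bm{H}^{\top}$. Your only deviation is cosmetic: you get $\rank(\bm{W})=\rank(\bm{H})$ directly from the Gram identity using $\rank(\bm{W}^{\top}\bm{W})=\rank(\bm{W})$, which is slightly cleaner than the paper's combination of the two min-rank inequalities, but it is the same argument in substance.
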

\begin{proof}

	Any critical point $\left(\bm{W},\bm{H},\bm{b}\right)$ of \eqref{maince} satisfies:
	\begin{equation}\label{r1}
		\nabla_{\bm{W}}
		f^{C}\left(\bm{W},\bm{H},\bm{b}\right)=
		\nabla g(\bm{R})\bm{H}^{\top}+\lambda_{\bm{W}}\bm{W}=\bm{0},
	\end{equation}
	\begin{equation}\label{r2}
		\nabla_{\bm{H}}
		f^{C}\left(\bm{W},\bm{H},\bm{b}\right)=\bm{W}^{\top}\nabla
		g(\bm{R})+\lambda_{\bm{H}}\bm{H}=\bm{0}.
	\end{equation}
It follows from
	\eqref{r1} and \eqref{r2} that
	$$	\bm{W}^{\top}\nabla g(\bm{R})\bm{H}^{\top}=-\lambda_{\bm{W}}\bm{W}^{\top}\bm{W},\quad 
	\bm{W}^{\top}\nabla g(\bm{R})\bm{H}^{\top}=-\lambda_{\bm{H}}\bm{HH}^{\top}.
	$$	Thus,
	\begin{equation}\label{eq:CEWH}
	    	\lambda_{\bm{W}}\bm{W}^{\top}\bm{W}=\lambda_{\bm{H}}\bm{HH}^{\top}.
	\end{equation}

Moreover, it follows from  \eqref{r1} that
	\begin{align}
		\rank(\bm{{W}})
		&=\rank(-\lambda_{\bm{W}}\bm{W})   \nonumber \\
		&=\rank(\nabla g(\bm{R})\bm{H}^{\top})  \nonumber \\
		&\leq\min\left\{ \rank(\bm{H}^{\top}),\rank\left(\nabla g(\bm{R})\right)\right\}. \label{eq:whmaince}
	\end{align}
Similarly, it follows from \eqref{r2} that
	\begin{equation}\label{eq:wh4}
		\rank(\bm{H})\leq\min\left\{\rank(\bm{W}^{\top}),\rank(\nabla g(\bm{R}))\right\}.
	\end{equation}
Combining \eqref{eq:whmaince} and \eqref{eq:wh4}, we have
	\begin{equation}\label{eq:wh2}	\rank(\bm{W})=\rank(\bm{H})\leq\rank(\nabla g(\bm{R})).
	\end{equation}
	The elements of  $\nabla g(\bm{R}) \in\mathbb{R}^{K\times N}$ are  given by
	\begin{align*}
		\frac{\partial g(\bm{R})}{\partial r_{k;k,j}}&=
		\frac{1}{N}\left(
		-1+\frac{e^{r_{k;k,j}}}{\sum_{l=1}^{K}e^{r_{l;k,j}}}\right),\quad k\in[K], \ j \in [n],\\
		\frac{\partial g(\bm{R})}{\partial r_{k;k',j}}&=
		\frac{1}{N}\left(
		\frac{e^{r_{k;k',j}}}{\sum_{l=1}^{K}e^{r_{l;k',j}}}\right),\quad  k'\neq k,\ \ k \in [K],\  k'\in[K], \ j \in [n].
	\end{align*}
	Direct calculation yields 
	\begin{align*}
		&\bm{1}_K^{\top}\nabla g(\bm{R}) \\
		=&\left(\frac{\partial g(\bm{R})}{\partial r_{1;1,1}}+\sum_{k\neq 1}\frac{\partial g(\bm{R})}{\partial r_{k;1,1}},\cdots,
		\frac{\partial g(\bm{R})}{\partial r_{K;K,n}}+\sum_{k\neq K}\frac{\partial g(\bm{R})}{\partial r_{k;K,n}}\right)\\
		=&\frac{1}{N}\left(\left(-1+\frac{e^{r_{1;1,1}}}{\sum_{l=1}^{K}e^{r_{l;1,1}}}\right)+\sum_{k\neq 1}\frac{e^{r_{k;1,1}}}{\sum_{l=1}^{K}e^{r_{l;1,1}}},\cdots,\left(-1+\frac{e^{r_{K;K,n}}}{\sum_{l=1}^{K}e^{r_{l;K,n}}}\right)+\sum_{k\neq K}\frac{e^{r_{k;K,n}}}{\sum_{l=1}^{K}e^{r_{l;K,n}}}\right)\\
		=&\left(0,\cdots,0\right )\in\mathbb{R}^{1\times N}.
	\end{align*}
	This indicates 
	\begin{equation}\label{eq:rk1}
		\rank(\nabla g(\bm{R}))\leq K-1.
	\end{equation}
	It follows from {\eqref{eq:wh2}} and \eqref{eq:rk1} that
	\begin{equation*}
		\rank(\bm{W})=\rank(\bm{H})\leq \rank(\nabla g(\bm{R})) \le K-1.
	\end{equation*}
\end{proof}

\begin{proof}[Proof of Theorem \ref{thm2}]\label{pot1}
	The critical points of $f^{C}(\bm{W},\bm{H},\bm{b}) $ are gathered by
	$$
	\mathcal{C}:=\left\{(\bm{W},\bm{H},\bm{b}) \vert \nabla_{\bm{W}} f^{C}(\bm{W},\bm{H},\bm{b})
	=\bm{0},\ \nabla_{\bm{H}} f^{C}(\bm{W},\bm{H},\bm{b})  =\bm{0},\ \nabla_{\bm{b}} f^{C}(\bm{W},\bm{H},\bm{b})  =\bm{0}\right\}.
	$$	
	We	separate the set $\mathcal{C}$ into two disjoint subsets 
	\begin{align*}
	\mathcal{C}_{1}:&= \mathcal{C} \cap \left\{(\bm{W},\bm{H},\bm{b}) \vert 
	\|\nabla g\left(\bm{WH+b}\bm{1}_N^{\top}\right)\|\leq \sqrt{\lambda_{\bm{W}}\lambda_{\bm{H}}}\right\} ,\\
	\mathcal{C}_{2}:&= \mathcal{C} \cap \left\{(\bm{W},\bm{H},\bm{b}) \vert \|\nabla g\left(\bm{WH+b}\bm{1}_N^{\top}\right)\|> \sqrt{\lambda_{\bm{W}}\lambda_{\bm{H}}}\right\},
	\end{align*}
	satisfying $\mathcal{C}=\mathcal{C}_{1}\cup \mathcal{C}_{2}$. 
	It follows from \cite[Lemma C.4]{zhu2021A}  that any $(\bm{W},\bm{H},\bm{b}) \in\mathcal{C}_{1}$ is a global optimal solution of $f^{C}(\bm{W},\bm{H},\bm{b}) $ in \eqref{maince}. 
	In the rest of proof, we show any vector in $\mathcal{C}_{2}$ possesses negative curvatures.
	For any direction 
	$$
	\bm{\Delta}=\left(\bm{\Delta_{W}},\bm{\Delta_{H}},\bm{\Delta_{b}}\right),
	$$  
	the Hessian bilinear form of $f^{C}(\bm{W},\bm{H},\bm{b}) $ along the direction $\bm{\Delta}$ is
	\begin{equation}\label{hb}
	\begin{split}
	\begin{aligned}        
	&\nabla^{2}f^{C}(\bm{W},\bm{H},\bm{b}) \left[\bm{\Delta},\bm{\Delta}\right]\\ 
	=&
	\nabla^{2}g\left(\bm{WH+b}\bm{1}_N^{\top}\right)
	\left[
	\bm{W\Delta_{H}+\Delta_{W}H+\Delta_{b}\bm{1}_N^{\top}},
	\bm{W\Delta_{H}+\Delta_{W}H+\Delta_{b}\bm{1}_N^{\top}}
	\right]\\
	&
	+2\trace\left[\left(\nabla g\left(\bm{WH+b}\bm{1}_N^{\top}\right)\right)^{\top}\bm{\Delta_{W}\Delta_{H}}\right]
	+\lambda_{\bm{W}}\|\bm{\Delta_{W}}\|_{F}^{2}
	+\lambda_{\bm{H}}\|\bm{\Delta_{H}}\|_{F}^{2}
	+\lambda_{\bm{b}}\|\bm{\Delta_{b}}\|_{2}^{2}.
	\end{aligned}
	\end{split}
	\end{equation}
	The necessary and sufficient condition of $\nabla^{2}f^{C}(\bm{W},\bm{H},\bm{b}) $ has at least one negative eigenvalue can be  expressed as  
	$$
	\nabla^{2}f^{C}(\bm{W},\bm{H},\bm{b})\left[\bm{\Delta},\bm{\Delta}\right]<0, 
	$$
	with some certain direction $\bm{\Delta}$.

	By Lemma \ref{l3}, we know that $\rank\left(\bm{W}\right)\leq K-1$. Hence, there exists a nonzero unit vector $\bm{a}\in\mathbb{R}^{K}$ in the null space of $\bm{W}$, i.e., 
	$$
	\bm{Wa}=\bm{0}.
	$$
	This, together with  \eqref{eq:CEWH}, implies that 

 	\begin{equation*}
0=\|\bm{Wa}\|_{2}^{2}
=\bm{a}^{\top}\bm{W}^{\top}\bm{Wa}
=\frac{\lambda_{\bm{H}}}{\lambda_{\bm{W}}}\bm{a}^{\top}\bm{HH}^{\top}\bm{a}
=\frac{\lambda_{\bm{H}}}{\lambda_{\bm{W}}}\|\bm{H}^{\top}\bm{a}\|_{2}^{2},
	\end{equation*}
 which leads to
 $$
 \bm{H^{\top}a} = \bm{0}.
 $$
The left and right singular vectors to the largest singular value of $\nabla g\left(\bm{WH+b}\bm{1}_N^{\top}\right)$ is denoted by $\bm{u}$ and $\bm{v}$, i.e., 
	$$
	\bm{u}^{\top}\nabla g\left(\bm{WH+b}\bm{1}_N^{\top}\right)\bm{v}=
	\left\|\nabla g\left(\bm{WH+b}\bm{1}_N^{\top}\right)\right\|.
	$$ 
	We construct the specific negative curvature direction
	\begin{equation}\label{eq:d}  
	\bm{\Delta}=\left(\bm{\Delta_{W}},\bm{\Delta_{H}},\bm{\Delta_{b}}\right)=
	\left(
	\left(\frac{\lambda_{\bm{H}}}{\lambda_{\bm{W}}}\right)^{\frac{1}{4}}\bm{ua^{\top}},
	-\left(\frac{\lambda_{\bm{H}}}{\lambda_{\bm{W}}}\right)^{-\frac{1}{4}}\bm{av^{\top}},
	\bm{0}
	\right).
	\end{equation}
	Since 
	$$\bm{Wa}= \bm{0},\quad \bm{a^{\top}H} = \bm{0}^{\top}, \quad \bm{\Delta_{b}}=\bm{0},$$
	direct calculation shows
	$$
	\bm{W}\bm{\Delta}_{\bm{H}}
	+\bm{\Delta}_{\bm{W}}\bm{H}+\bm{\Delta}_{\bm{b}}\bm{1}_N^{\top}
 =\bm{0},
	$$
	which leads to
	\begin{equation}\label{eq:zero}
	\nabla^{2}g\left(\bm{WH+b}\bm{1}_N^{\top}\right)
	\left[\bm{W\Delta_{H}+\Delta_{W}H+\Delta_{b}\bm{1}_N^{\top}},\bm{W\Delta_{H}+\Delta_{W}H+\Delta_{b}}\bm{1}_N^{\top}\right]= 0.
	\end{equation}
	It follows from \eqref{hb},  \eqref{eq:d} and \eqref{eq:zero} that
	\begin{align*}
	\nabla^{2}f^{C}(\bm{W},\bm{H},\bm{b})[\bm{\Delta,\Delta}]
	=&-2\trace\left(\left(\nabla g\left(\bm{WH+b}\bm{1}_N^{\top}\right)\right)^{\top}\bm{uv^{\top}}\right)+2\sqrt{\lambda_{\bm{W}}\lambda_{\bm{H}}}\\
	=&-2\left(\left\|\nabla g\left(\bm{WH+b}\bm{1}_N^{\top}\right)\right\|-\sqrt{\lambda_{\bm{W}}\lambda_{\bm{H}}}\right)<0,
	\end{align*}
	where the last inequality is implied by $\|\nabla g\left(\bm{WH+b}\bm{1}_N^{\top}\right)\|>\sqrt{\lambda_{\bm{W}}\lambda_{\bm{H}}}$.

\end{proof}

\section{Proof of Theorem \ref{thm3}}
We establish several propositions on the critical points of  \eqref{mainmse} first.
For the variable   in \eqref{mainmse}, 
we define an auxiliary variable 
 $$
 \widetilde{\bm{Y}}=\bm{Y-b}\bm{1}_N^{\top} \in\mathbb{R}^{K\times N}$$ 
and an auxiliary function
 $$
 \tilde{g}(\bm{WH+b1}_{N}^{\top})=\frac{1}{2N}\mathcal{L}_{MSE}\left(\bm{WH+b1}^{\top}_{N}-\bm{Y}\right).
 $$

\begin{proposition}\label{mselm2}
	If a critical point $\left(\bm{W},\bm{H},\bm{b}\right)$ of \eqref{mainmse} satisfies 
	\begin{equation}\label{mseopt}
	\|\bm{WH-\widetilde{Y}}\|\leq N\sqrt{\lambda_{\bm{W}}\lambda_{\bm{H}}},
	\end{equation}
	then it is a global minimizer of \eqref{mainmse}.
\end{proposition}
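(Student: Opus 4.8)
The plan is to bound $f^{M}$ below by a jointly convex surrogate and then show the given critical point attains that surrogate's global minimum. Write $\bm{E}:=\bm{WH}+\bm{b}\bm{1}_N^{\top}-\bm{Y}=\bm{WH}-\widetilde{\bm{Y}}$ for the residual, and recall the weighted variational characterization of the nuclear norm: by the arithmetic--geometric mean inequality applied to $\lambda_{\bm{W}}\|\bm{W}\|_F^2$ and $\lambda_{\bm{H}}\|\bm{H}\|_F^2$, together with $\|\bm{WH}\|_*\le\|\bm{W}\|_F\|\bm{H}\|_F$, one gets $\tfrac{\lambda_{\bm{W}}}{2}\|\bm{W}\|_F^2+\tfrac{\lambda_{\bm{H}}}{2}\|\bm{H}\|_F^2\ge\sqrt{\lambda_{\bm{W}}\lambda_{\bm{H}}}\,\|\bm{WH}\|_*$. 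Substituting this into \eqref{mainmse} yields $f^{M}(\bm{W},\bm{H},\bm{b})\ge h(\bm{WH},\bm{b})$, where $h(\bm{Z},\bm{b}):=\tfrac{1}{2N}\|\bm{Z}+\bm{b}\bm{1}_N^{\top}-\bm{Y}\|_F^2+\sqrt{\lambda_{\bm{W}}\lambda_{\bm{H}}}\,\|\bm{Z}\|_*+\tfrac{\lambda_{\bm{b}}}{2}\|\bm{b}\|_2^2$ is jointly convex in $(\bm{Z},\bm{b})$. Because $d=K$, every $\bm{Z}\in\mathbb{R}^{K\times N}$ has rank at most $K$ and hence admits a factorization $\bm{Z}=\bm{WH}$ (obtained from the SVD and padded with zero columns/rows to inner dimension $K$) that attains equality in the nuclear-norm bound; consequently $\min f^{M}=\min h$, and it suffices to show that $(\bm{Z},\bm{b})=(\bm{WH},\bm{b})$ minimizes the convex $h$ while the critical-point factorization is tight.

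For the convex optimality I would check the two first-order conditions for $h$. The condition in $\bm{b}$, namely $\tfrac1N\bm{E}\bm{1}_N+\lambda_{\bm{b}}\bm{b}=\bm{0}$, is exactly the vanishing of $\nabla_{\bm{b}}f^{M}$ at the critical point, so it holds automatically. The condition in $\bm{Z}$ reduces to showing $\bm{G}:=-\tfrac{1}{N\sqrt{\lambda_{\bm{W}}\lambda_{\bm{H}}}}\bm{E}$ lies in $\partial\|\bm{WH}\|_*$, for which I would invoke the dual characterization of the subdifferential of a norm: $\bm{G}\in\partial\|\bm{Z}\|_*$ if and only if $\|\bm{G}\|\le 1$ and $\langle\bm{G},\bm{Z}\rangle=\|\bm{Z}\|_*$. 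The spectral bound $\|\bm{G}\|\le 1$ is precisely the hypothesis \eqref{mseopt}, since $\|\bm{E}\|\le N\sqrt{\lambda_{\bm{W}}\lambda_{\bm{H}}}$.

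The heart of the argument is the inner-product identity, extracted from the gradient equations $\tfrac1N\bm{E}\bm{H}^{\top}=-\lambda_{\bm{W}}\bm{W}$ and $\tfrac1N\bm{W}^{\top}\bm{E}=-\lambda_{\bm{H}}\bm{H}$. Taking traces after suitable multiplication shows $\langle-\tfrac1N\bm{E},\,\bm{WH}\rangle=\lambda_{\bm{W}}\|\bm{W}\|_F^2=\lambda_{\bm{H}}\|\bm{H}\|_F^2$; in particular the two weighted norms coincide, so the arithmetic--geometric mean step is already tight and $\lambda_{\bm{W}}\|\bm{W}\|_F^2=\sqrt{\lambda_{\bm{W}}\lambda_{\bm{H}}}\,\|\bm{W}\|_F\|\bm{H}\|_F$. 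On the other hand, spectral/nuclear duality together with \eqref{mseopt} gives $\langle-\tfrac1N\bm{E},\,\bm{WH}\rangle\le\|\tfrac1N\bm{E}\|\,\|\bm{WH}\|_*\le\sqrt{\lambda_{\bm{W}}\lambda_{\bm{H}}}\,\|\bm{WH}\|_*$. Chaining these with the always-valid $\|\bm{WH}\|_*\le\|\bm{W}\|_F\|\bm{H}\|_F$ forces every inequality to be an equality: this simultaneously yields $\langle\bm{G},\bm{WH}\rangle=\|\bm{WH}\|_*$ (completing the subgradient verification, so $(\bm{WH},\bm{b})$ minimizes $h$) and $\|\bm{W}\|_F\|\bm{H}\|_F=\|\bm{WH}\|_*$ (so the factorization is tight, i.e.\ $f^{M}(\bm{W},\bm{H},\bm{b})=h(\bm{WH},\bm{b})$). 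Combining, $f^{M}(\bm{W},\bm{H},\bm{b})=h(\bm{WH},\bm{b})=\min h=\min f^{M}$.

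The main obstacle is precisely this squeezing step: the residual bound \eqref{mseopt} must do double duty, certifying the spectral-norm side of the subdifferential while forcing the factorization to be balanced and aligned so that the nuclear-norm relaxation is attained. I would also take care with the reduction $\min f^{M}=\min h$, verifying that the optimal $\bm{Z}$ never exceeds rank $d=K$ so that a genuine factorization of the correct inner dimension exists; this is the one place where the hypothesis $d=K$ is essential.
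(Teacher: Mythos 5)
Your proof is correct, but it is substantially more self-contained than the paper's: the paper's entire proof consists of computing $\nabla\tilde{g}(\bm{WH}+\bm{b}\bm{1}_N^{\top})=\frac{1}{N}(\bm{WH}-\widetilde{\bm{Y}})$ and invoking \cite[Lemma C.4]{zhu2021A}, whereas you have in effect reproved that cited lemma from scratch. Your route --- lower-bounding $f^{M}$ by the convex surrogate $h$ via $\frac{\lambda_{\bm{W}}}{2}\|\bm{W}\|_F^2+\frac{\lambda_{\bm{H}}}{2}\|\bm{H}\|_F^2\geq\sqrt{\lambda_{\bm{W}}\lambda_{\bm{H}}}\,\|\bm{WH}\|_*$, then certifying $(\bm{WH},\bm{b})$ as a global minimizer of $h$ through the dual characterization of $\partial\|\cdot\|_*$, with the balancedness identity $\lambda_{\bm{W}}\|\bm{W}\|_F^2=\lambda_{\bm{H}}\|\bm{H}\|_F^2=\langle-\tfrac{1}{N}\bm{E},\bm{WH}\rangle$ extracted from the gradient equations and the squeeze forcing both $\langle\bm{G},\bm{WH}\rangle=\|\bm{WH}\|_*$ and $\|\bm{WH}\|_*=\|\bm{W}\|_F\|\bm{H}\|_F$ --- is precisely the mathematics underlying the cited lemma, and every step checks out; in particular, since the nonsmooth part of $h$ is separable in $(\bm{Z},\bm{b})$, verifying the two partial first-order conditions separately does suffice for joint optimality. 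One simplification worth noting: your reduction $\min f^{M}=\min h$, and with it your only invocation of $d=K$, is dispensable. Since $f^{M}(\bm{W}',\bm{H}',\bm{b}')\geq h(\bm{W}'\bm{H}',\bm{b}')\geq\min h$ for \emph{every} feasible point, and you have shown $f^{M}(\bm{W},\bm{H},\bm{b})=h(\bm{WH},\bm{b})=\min h$, the chain $f^{M}(\bm{W},\bm{H},\bm{b})=\min h\leq\min f^{M}\leq f^{M}(\bm{W},\bm{H},\bm{b})$ already closes the argument without ever factoring a minimizer of $h$. Consequently this proposition, like \cite[Lemma C.4]{zhu2021A}, holds for arbitrary feature dimension $d$; the hypothesis $d=K$ does real work elsewhere in the paper (e.g.\ in producing the null-space vector $\bm{\alpha}$ in the second case of the proof of Theorem \ref{thm3}), not here.
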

\begin{proof}
	
	Direct calculation yields
	$$
	\nabla\tilde{g}(\bm{WH+b1}_{N}^{\top})=
	\frac{1}{N}\left(\bm{WH-\widetilde{Y}}\right).
	$$
	It follows from \cite[Lemma C.4]{zhu2021A}  that  the critical point $(\bm{W},\bm{H},\bm{b})$ of \eqref{mainmse} satisfies 
	$$
	\frac{1}{N}
	\|\bm{WH-\widetilde{Y}}\|\leq \sqrt{\lambda_{\bm{W}}\lambda_{\bm{H}}}
	$$
	is a global minimizer of \eqref{mainmse}. 

\end{proof}
\begin{proposition}\label{mselm1}
    Any critical point $\left(\bm{W},\bm{H},\bm{b}\right)$ of  \eqref{mainmse} obeys
\begin{equation}\label{eq:wh1}
         		\bm{W}^{\top}\bm{W}=\frac{\lambda_{\bm{H}}}{\lambda_{\bm{W}}}\bm{HH}^{\top},
\quad \|\bm{W}\|_{F}^{2}=\frac{\lambda_{\bm{H}}}{\lambda_{\bm{W}}}\|\bm{H}\|_{F}^{2},
\end{equation}
and 
\begin{equation}\label{mseeq}
\rank(\bm{W})=\rank(\bm{H})\leq \rank(\widetilde{\bm{Y}}).
\end{equation}
\end{proposition}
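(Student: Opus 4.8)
The plan is to follow the template of Lemma~\ref{l3}, with $\nabla g(\bm{R})$ replaced by the explicit residual gradient $\nabla\tilde{g}(\bm{WH+b}\bm{1}_N^{\top}) = \frac{1}{N}(\bm{WH}-\widetilde{\bm{Y}})$ recorded in Proposition~\ref{mselm2}, and then to add one new step that produces the sharp bound by $\rank(\widetilde{\bm{Y}})$. First I would write down the two stationarity conditions obtained by differentiating $f^{M}$ in $\bm{W}$ and in $\bm{H}$: namely $\frac{1}{N}(\bm{WH}-\widetilde{\bm{Y}})\bm{H}^{\top}+\lambda_{\bm{W}}\bm{W}=\bm{0}$ and $\frac{1}{N}\bm{W}^{\top}(\bm{WH}-\widetilde{\bm{Y}})+\lambda_{\bm{H}}\bm{H}=\bm{0}$. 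The condition in $\bm{b}$ is not needed here.

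To obtain the first identity in \eqref{eq:wh1}, I would left-multiply the $\bm{W}$-equation by $\bm{W}^{\top}$ and right-multiply the $\bm{H}$-equation by $\bm{H}^{\top}$; both then share the common term $\frac{1}{N}\bm{W}^{\top}(\bm{WH}-\widetilde{\bm{Y}})\bm{H}^{\top}$, so equating the remaining terms gives $\lambda_{\bm{W}}\bm{W}^{\top}\bm{W}=\lambda_{\bm{H}}\bm{HH}^{\top}$, i.e. $\bm{W}^{\top}\bm{W}=\frac{\lambda_{\bm{H}}}{\lambda_{\bm{W}}}\bm{HH}^{\top}$. Taking the trace of this identity produces the Frobenius-norm relation $\|\bm{W}\|_{F}^{2}=\frac{\lambda_{\bm{H}}}{\lambda_{\bm{W}}}\|\bm{H}\|_{F}^{2}$, which completes \eqref{eq:wh1}. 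The rank equality then comes for free, since $\rank(\bm{W})=\rank(\bm{W}^{\top}\bm{W})=\rank(\frac{\lambda_{\bm{H}}}{\lambda_{\bm{W}}}\bm{HH}^{\top})=\rank(\bm{HH}^{\top})=\rank(\bm{H})$, the middle equality using that the two Gram matrices coincide up to the positive scalar $\lambda_{\bm{H}}/\lambda_{\bm{W}}$.

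For the bound $\rank(\bm{H})\leq\rank(\widetilde{\bm{Y}})$ in \eqref{mseeq}, I would \emph{not} pass through $\rank(\bm{WH}-\widetilde{\bm{Y}})$ as in the cross-entropy case, but instead rearrange the $\bm{H}$-equation into $(N\lambda_{\bm{H}}\bm{I}_K+\bm{W}^{\top}\bm{W})\bm{H}=\bm{W}^{\top}\widetilde{\bm{Y}}$. Because $\lambda_{\bm{H}}>0$, the matrix $N\lambda_{\bm{H}}\bm{I}_K+\bm{W}^{\top}\bm{W}$ is positive definite and hence invertible, so $\bm{H}=(N\lambda_{\bm{H}}\bm{I}_K+\bm{W}^{\top}\bm{W})^{-1}\bm{W}^{\top}\widetilde{\bm{Y}}$ exhibits $\bm{H}$ as a matrix multiple of $\widetilde{\bm{Y}}$, forcing $\rank(\bm{H})\leq\rank(\bm{W}^{\top}\widetilde{\bm{Y}})\leq\rank(\widetilde{\bm{Y}})$. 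Combined with the rank equality above, this yields \eqref{mseeq}.

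I expect this last step to be the crux. In the cross-entropy setting the sharp rank deficiency came from the combinatorial fact that the softmax columns of $\nabla g(\bm{R})$ sum to zero, giving $\rank(\nabla g)\leq K-1$; here there is no such structure, and the controlling quantity is instead $\rank(\widetilde{\bm{Y}})$, which must be extracted by absorbing the regularized Gram matrix into an invertible factor. As a consistency check, the symmetric rearrangement of the $\bm{W}$-equation, namely $\bm{W}(N\lambda_{\bm{W}}\bm{I}_K+\bm{HH}^{\top})=\widetilde{\bm{Y}}\bm{H}^{\top}$, yields $\rank(\bm{W})\leq\rank(\widetilde{\bm{Y}})$ by the same reasoning.
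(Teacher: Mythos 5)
Your proposal is correct and follows essentially the same route as the paper: the same pairing of the stationarity conditions to get $\lambda_{\bm{W}}\bm{W}^{\top}\bm{W}=\lambda_{\bm{H}}\bm{HH}^{\top}$, the trace for the Frobenius identity, and the same key step of absorbing the regularized Gram matrix $N\lambda\bm{I}_K+(\cdot)$ into an invertible factor to bound the rank by $\rank(\widetilde{\bm{Y}})$, exactly as the paper does with \eqref{eq:first} and \eqref{eq:sec}. The only cosmetic difference is that you obtain $\rank(\bm{W})=\rank(\bm{H})$ directly from $\rank(\bm{W})=\rank(\bm{W}^{\top}\bm{W})=\rank(\bm{HH}^{\top})=\rank(\bm{H})$, slightly more economically than the paper's combination of the two min-inequalities \eqref{mserl1} and \eqref{mserl2}.
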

\begin{proof}
	Any critical point $\left(\bm{W},\bm{H},\bm{b}\right)$ of \eqref{mainmse} satisfies:
	\begin{equation}\label{mser1}
		\nabla_{\bm{W}}
		f^{M}\left(\bm{W},\bm{H},\bm{b}\right)=\frac{1}{N}
		\left(\bm{WH-\widetilde{Y}}\right)\bm{H}^{\top}+\lambda_{\bm{W}}\bm{W}=\bm{0},
	\end{equation}
	\begin{equation}\label{mser2}
		\nabla_{\bm{H}}
		f^{M}\left(\bm{W},\bm{H},\bm{b}\right)=\frac{1}{N}
		\bm{W}^{\top}\left(\bm{WH-\widetilde{Y}}\right)+\lambda_{\bm{H}}\bm{H}=\bm{0}.
	\end{equation}
  By left multiplying \eqref{mser1} by $\bm{W}^{\top}$ on both sides, followed by right multiplying \eqref{mser2} by $\bm{H}^{\top}$ on both sides, and combining the equations together, we attain
$$
\lambda_{\bm{W}}\bm{W}^{\top}\bm{W}=\lambda_{\bm{H}}\bm{HH}^{\top}.
	$$
		The above identity is equivalent to
		\begin{equation}\label{eq:fd}
		\|(\bm{h}^{j})^{\top}\|_{2}^{2}=\frac{\lambda_{\bm{W}}}{\lambda_{\bm{H}}}\|\bm{w}_{j}\|_{2}^{2}, \quad j \in [K].
		\end{equation}
		Therefore, we have 		either
		$$
		\bm{w}_{j}=\bm{0},
		\quad 
		\bm{h}^{j}=\bm{0},
		$$ 
		or 
		$$
		\bm{w}_{j} \neq \bm{0},
		\quad 
		\bm{h}^{j} \neq \bm{0}.
	$$
	Moreover,
	$$
	\|\bm{W}\|_{F}^{2}=\trace\left(\bm{W}^{\top}\bm{W}\right)=\trace\left(\frac{\lambda_{\bm{H}}}{\lambda_{\bm{W}}}\bm{HH}^{\top}\right)=\frac{\lambda_{\bm{H}}}{\lambda_{\bm{W}}}\trace\left(\bm{HH}^{\top}\right)=\frac{\lambda_{\bm{H}}}{\lambda_{\bm{W}}}\|\bm{H}\|_{F}^{2}.
	$$
        The equations \eqref{mser1} and \eqref{mser2} are equivalent to
        \begin{equation}\label{eq:first}
            \bm{WHH}^{\top}+N\lambda_{\bm{W}}\bm{W}=\bm{\widetilde{Y}H}^{\top},
        \end{equation}
        \begin{equation}\label{eq:sec}
            \bm{W}^{\top}\bm{WH}+N\lambda_{\bm{H}}\bm{H}=\bm{W}^{\top}\bm{\widetilde{Y}}.
        \end{equation}
    We note that \eqref{eq:first} is equivalent to
    $$
        \bm{W}\left(\bm{HH}^{\top}+N\lambda_{\bm{W}}\bm{I}_{K}\right)=\bm{\widetilde{Y}H}^{\top}.
    $$
    The above identity, together with the fact 
    \begin{equation*}\label{msele}
        \rank\left( \bm{W}\left(\bm{HH}^{\top}+N\lambda_{\bm{W}}\bm{I}_{K}\right)\right)=\rank(\bm{W}),
    \end{equation*}
    implies that  
    \begin{equation}\label{mserl1}
        \rank(\bm{W})\leq \min\left\{\rank(\bm{\widetilde{Y}}),\rank(\bm{H}^{\top})\right\}.
    \end{equation}
    Similarly, it follows from \eqref{eq:sec} that 
       \begin{equation}\label{mserl2}
        \rank(\bm{H})\leq \min\left\{\rank(\bm{\widetilde{Y}}),\rank(\bm{W}^{\top})\right\}.
    \end{equation}
    Combining \eqref{mserl1} and \eqref{mserl2}, we have
   \begin{equation*}
       \rank(\bm{W})=\rank(\bm{H})\leq \rank(\widetilde{\bm{Y}}).
   \end{equation*}
\end{proof}


\begin{proposition}\label{prop2}
Suppose that   $(\bm{W},\bm{H},\bm{b})$ is a critical point  of \eqref{mainmse}.
The  SVD  of $\bm{W}$ is denoted by
$
\bm{W}=\bm{U_{W}\Sigma_{W}V_{W}}^{\top}.
$
Let
$\bm{\widetilde{W}}=\bm{WV_{W}}$ and  $\bm{\widetilde{H}} = {\bm{V_{W}}^{\top}}\bm{H}.$
Then 
$$
\bm{\widetilde{W}}=
\begin{pmatrix}
\bm{\widehat{W}} & \bm{0}
\end{pmatrix},\quad 
\bm{\widetilde{H}} = \begin{pmatrix}
\bm{\widehat{H}} \\ \bm{0}
\end{pmatrix},
$$
where the columns of $\bm{\widehat{W}}$ are orthogonal and the rows of $\bm{\widehat{H}}$  are orthogonal. The zeros in $\bm{\widetilde{W}}$ and
$\bm{\widetilde{H}}$
might or might not
exist, depending on the rank of $\bm{W}$ and $\bm{H}$. The point
$(\bm{\widetilde{W}},\bm{\widetilde{H}},\bm{b})$ is a critical point  of \eqref{mainmse}. 
Moreover,  $(\bm{W},\bm{H},\bm{b})$ and  $(\bm{\widetilde{W}},\bm{\widetilde{H}},\bm{b})$ have the same Hessian information.
\end{proposition}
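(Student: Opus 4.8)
The plan is to exploit a rotational symmetry of $f^{M}$ together with the critical-point identity already proved in Proposition \ref{mselm1}. I would work with the full SVD of $\bm{W}$, so that $\bm{U_{W}},\bm{V_{W}}\in\mathbb{R}^{K\times K}$ are orthogonal and $\bm{\Sigma_{W}}\in\mathbb{R}^{K\times K}$ is diagonal, with the last $K-r$ diagonal entries equal to zero, where $r=\rank(\bm{W})$.

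First I would read off the block structure of $\widetilde{\bm{W}}$. Since $\bm{V_{W}}^{\top}\bm{V_{W}}=\bm{I}_{K}$, we get $\widetilde{\bm{W}}=\bm{W}\bm{V_{W}}=\bm{U_{W}}\bm{\Sigma_{W}}$, whose $i$-th column is $\sigma_{i}\bm{u}_{i}$. As the $\bm{u}_{i}$ are orthonormal, the first $r$ columns are pairwise orthogonal and the remaining $K-r$ columns vanish, giving $\widetilde{\bm{W}}=(\widehat{\bm{W}}\ \ \bm{0})$ with $\widehat{\bm{W}}$ of orthogonal columns. For $\widetilde{\bm{H}}$ I would invoke $\bm{W}^{\top}\bm{W}=\frac{\lambda_{\bm{H}}}{\lambda_{\bm{W}}}\bm{H}\bm{H}^{\top}$ from Proposition \ref{mselm1}, which with $\bm{W}^{\top}\bm{W}=\bm{V_{W}}\bm{\Sigma_{W}}^{2}\bm{V_{W}}^{\top}$ gives
\begin{equation*}
\widetilde{\bm{H}}\widetilde{\bm{H}}^{\top}=\bm{V_{W}}^{\top}\bm{H}\bm{H}^{\top}\bm{V_{W}}=\frac{\lambda_{\bm{W}}}{\lambda_{\bm{H}}}\bm{\Sigma_{W}}^{2}.
\end{equation*}
This being diagonal forces the rows of $\widetilde{\bm{H}}$ to be pairwise orthogonal, and each row of index larger than $r$ to have zero norm, hence $\widetilde{\bm{H}}=\begin{pmatrix}\widehat{\bm{H}}\\ \bm{0}\end{pmatrix}$ with $\widehat{\bm{H}}$ of orthogonal rows; the zero blocks appear precisely when $r<K$.

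For the last two claims the idea is to note that, for any fixed orthogonal $\bm{Q}\in\mathbb{R}^{K\times K}$, the linear map $\Phi_{\bm{Q}}(\bm{W},\bm{H},\bm{b})=(\bm{W}\bm{Q},\bm{Q}^{\top}\bm{H},\bm{b})$ is a symmetry of $f^{M}$: the identity $\bm{W}\bm{Q}\bm{Q}^{\top}\bm{H}+\bm{b}\bm{1}_{N}^{\top}=\bm{W}\bm{H}+\bm{b}\bm{1}_{N}^{\top}$ leaves the loss term fixed, while $\|\bm{W}\bm{Q}\|_{F}=\|\bm{W}\|_{F}$ and $\|\bm{Q}^{\top}\bm{H}\|_{F}=\|\bm{H}\|_{F}$ leave the regularizers fixed, so $f^{M}\circ\Phi_{\bm{Q}}=f^{M}$. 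Since $\Phi_{\bm{Q}}$ is moreover an isometry of parameter space under the Frobenius/Euclidean inner product, its Jacobian $\bm{J}$ is orthogonal. Taking $\bm{Q}=\bm{V_{W}}$ identifies $\Phi_{\bm{V_{W}}}(\bm{W},\bm{H},\bm{b})$ with $(\widetilde{\bm{W}},\widetilde{\bm{H}},\bm{b})$; differentiating $f^{M}=f^{M}\circ\Phi_{\bm{V_{W}}}$ once shows $\nabla f^{M}$ still vanishes at the transformed point, so it is critical, and differentiating twice gives $\nabla^{2}f^{M}(\widetilde{\bm{W}},\widetilde{\bm{H}},\bm{b})=\bm{J}\,\nabla^{2}f^{M}(\bm{W},\bm{H},\bm{b})\,\bm{J}^{\top}$, a similarity transformation that preserves the whole spectrum, i.e. the same Hessian information.

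The routine parts are the two block computations; the step I expect to need the most care is the block argument for $\widetilde{\bm{H}}$, which genuinely requires the critical-point relation $\lambda_{\bm{W}}\bm{W}^{\top}\bm{W}=\lambda_{\bm{H}}\bm{H}\bm{H}^{\top}$ and the use of the full (not compact) SVD so that $\bm{V_{W}}$ is square orthogonal and $\widetilde{\bm{W}}\widetilde{\bm{H}}=\bm{W}\bm{H}$. The only conceptual subtlety is that $\bm{V_{W}}$ depends on the point, but since it is held fixed at the value determined by the given critical point, $\Phi_{\bm{V_{W}}}$ is a genuine fixed linear symmetry and the gradient and Hessian conclusions follow with no circularity.
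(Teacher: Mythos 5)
Your proof is correct, and while the first half mirrors the paper, the second half takes a genuinely different and in fact stronger route. For the block structure your argument is essentially the paper's: both use the full (square) SVD to get $\widetilde{\bm{W}}=\bm{U_{W}}\bm{\Sigma_{W}}$, and both invoke $\lambda_{\bm{W}}\bm{W}^{\top}\bm{W}=\lambda_{\bm{H}}\bm{H}\bm{H}^{\top}$ from Proposition \ref{mselm1}; the paper states $\lambda_{\bm{W}}\widetilde{\bm{W}}^{\top}\widetilde{\bm{W}}=\lambda_{\bm{H}}\widetilde{\bm{H}}\widetilde{\bm{H}}^{\top}$ while you make the same point explicit by computing $\widetilde{\bm{H}}\widetilde{\bm{H}}^{\top}=\frac{\lambda_{\bm{W}}}{\lambda_{\bm{H}}}\bm{\Sigma_{W}}^{2}$, which cleanly delivers both the orthogonality of the rows and the vanishing of rows of index beyond $\rank(\bm{W})$. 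Where you diverge is in establishing criticality and the Hessian claim. The paper verifies criticality by hand, multiplying the first-order equations \eqref{oeq1}--\eqref{oeq3} on the right by $\bm{V_{W}}$ and using $\widetilde{\bm{W}}\widetilde{\bm{H}}=\bm{W}\bm{H}$; and its ``same Hessian information'' is established only in a restricted sense, by exhibiting a bijection $\bm{\alpha}\leftrightarrow\widetilde{\bm{\alpha}}=\bm{V_{W}}^{\top}\bm{\alpha}$ between null-space vectors and matching the Hessian quadratic form along the specific rank-one directions \eqref{msed1} built from them --- exactly what is needed later in the proof of Theorem \ref{thm3}, but no more. Your symmetry argument, that $\Phi_{\bm{Q}}(\bm{W},\bm{H},\bm{b})=(\bm{W}\bm{Q},\bm{Q}^{\top}\bm{H},\bm{b})$ is a linear isometry preserving $f^{M}$ for any fixed orthogonal $\bm{Q}$ (here $\bm{Q}=\bm{V_{W}}$, legitimate since $d=K$ makes $\bm{W}$ square and $\bm{V_{W}}\in\mathbb{R}^{K\times K}$), gives both conclusions at once: differentiating $f^{M}=f^{M}\circ\Phi_{\bm{Q}}$ once yields criticality of the transformed point, and twice yields
$\nabla^{2}f^{M}(\widetilde{\bm{W}},\widetilde{\bm{H}},\bm{b})=\bm{J}\,\nabla^{2}f^{M}(\bm{W},\bm{H},\bm{b})\,\bm{J}^{\top}$
with $\bm{J}$ orthogonal, so the two Hessians are orthogonally conjugate and share the entire spectrum, not merely the values of the quadratic form on the constructed directions. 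You also correctly defuse the one conceptual trap, the dependence of $\bm{V_{W}}$ on the point: since $\bm{Q}$ is frozen at its value at the given critical point, $\Phi_{\bm{Q}}$ is a fixed global symmetry and there is no circularity. It is worth noting that the paper's direction correspondence is precisely the image of its $\bm{\Delta}$ under your $\bm{J}$, so your argument subsumes the paper's computation; what the paper's hands-on version buys in exchange is that it needs no discussion of Jacobians and delivers exactly the identity \eqref{eq:c2} used downstream.
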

\begin{proof}
Since $(\bm{W},\bm{H},\bm{b})$ is a critical point  of \eqref{mainmse},  we have
	$$
	\nabla_{\bm{W}}
	f^{M}\left(\bm{W},\bm{H},\bm{b}\right)=\bm{0} , \quad 
	\nabla_{\bm{H}}
	f^{M}\left(\bm{W},\bm{H},\bm{b}\right) =\bm{0}, \quad 
	\nabla_{\bm{b}}
	f^{M}\left(\bm{W},\bm{H},\bm{b}\right)= \bm{0},
	$$
	which are equivalent to
	\begin{equation}\label{oeq1}
	\frac{\lambda_{\bm{W}}}{\lambda_{\bm{H}}}\bm{WW}^{\top}\bm{W}+N\lambda_{\bm{W}}\bm{W}
	=
	\widetilde{\bm{Y}}
	\bm{H}^{\top},
	\end{equation}
	\begin{equation}\label{oeq2}
	\frac{\lambda_{\bm{H}}}{\lambda_{\bm{W}}}\bm{H}^{\top}\bm{HH}^{\top}+N\lambda_{\bm{H}}\bm{H}^{\top}
	=
\widetilde{\bm{Y}}^{\top}
	\bm{W},
	\end{equation}
	\begin{equation}\label{oeq3}
 	\frac{1}{N}
	\left(
	\bm{WH} + 	\bm{b 1}^{\top}_N-\bm{Y}
	\right)\bm{1}_N
 + \lambda_{\bm{b}}\bm{b}=\bm{0}.
	\end{equation}
	Using the SVD of $\bm{W}$, we have
		\begin{equation}\label{eq:ortho}
		\bm{\widetilde{W}}=\bm{WV_{W}}=\bm{U_{W}\Sigma_{W}}=
		\begin{pmatrix}
		\bm{\widehat{W}} & \bm{0}
		\end{pmatrix},
		\end{equation}
	where the columns of $\bm{\widehat{W}}$ are orthogonal and  $\bm{0}$ probably exists depending on $\rank(\bm{W})$.
	The condition
	$$\lambda_{\bm{W}}\bm{{W}}^{\top}\bm{{W}}=\lambda_{\bm{H}}\bm{{H}{H}}^{\top}$$  
	in  \eqref{eq:wh1} implies that 
	$$\lambda_{\bm{W}} \bm{\widetilde{W}}^{\top} \bm{\widetilde{W}}=\lambda_{\bm{H}} \bm{\widetilde{H}} \bm{\widetilde{H}}^{\top}.$$  	
	This indicates the rows of $\bm{\widetilde{H}} = \begin{pmatrix}
	\bm{\widehat{H}} \\ \bm{0}
	\end{pmatrix} $ are orthogonal.
	Moreover, 
	\begin{equation}\label{eq:oeqno}
	\bm{\widetilde{W}\widetilde{H}} =  {\bm{W}\bm{V_{W}V_{W}}^{\top}\bm{H}} =\bm{WH}.
	\end{equation}
	It follows from \eqref{oeq3} and \eqref{eq:oeqno}  that
	$$
	\frac{1}{N}\left(
	\bm{\widetilde{W}\widetilde{H}}+
	\bm{b 1}^{\top}_N-\bm{Y}
	\right) \bm{1}_{N} 
	+ \lambda_{\bm{b}}\bm{b}=\bm{0}.
	$$
	Multiplying  both sides of \eqref{oeq1} and  \eqref{oeq2} by  $\bm{V_{W}}$ , we attain 
	$$
	\frac{\lambda_{\bm{W}}}{\lambda_{\bm{H}}}\bm{\widetilde{W}\widetilde{W}}^{\top}\bm{\widetilde{W}}+N\lambda_{\bm{W}}\bm{\widetilde{W}}=
	\bm{\widetilde{Y}}
	\bm{\widetilde{H}}^{\top},
	\quad  
	\frac{\lambda_{\bm{H}}}{\lambda_{\bm{W}}}\bm{\widetilde{H}}^{\top}\bm{\widetilde{H}\widetilde{H}}^{\top}+N\lambda_{\bm{H}}\bm{\widetilde{H}}^{\top}
	=
	\bm{\widetilde{Y}}^{\top}
	\bm{\widetilde{W}}.
	$$
	Hence, $(\bm{\widetilde{W}},\bm{\widetilde{H}}, \bm{b})$ is also a critical point of \eqref{mainmse}.
 
Suppose that there  exists a non-zero vector $\bm{\alpha}\in\mathbb{R}^{K}$ such that 
\begin{equation}\label{eq:a1}
	\bm{W\alpha=0},\quad  \bm{\alpha}^{\top}\bm{H=0}.
\end{equation}
Then we  construct  a direction  $\bm{\Delta}$ for the critical point $(\bm{W},\bm{H},\bm{b})$ as
\begin{equation}\label{msed1}
	\bm{\Delta}=\left(\bm{\Delta_{W}},\bm{\Delta_{H}},\bm{\Delta_{b}}\right)=	
	\left(
	\left(\frac{\lambda_{\bm{H}}}{\lambda_{\bm{W}}}\right)^{\frac{1}{4}}\bm{u}\bm{\alpha}^{\top},
	\left(\frac{\lambda_{\bm{H}}}{\lambda_{\bm{W}}}\right)^{-\frac{1}{4}}\bm{\alpha } \bm{v}^{\top},
	\bm{0}
	\right)
\end{equation}
where  $\bm{u}\in\mathbb{R}^K$, $\bm{v}\in\mathbb{R}^N$ are unit vectors. Then
the Hessian  bilinear form of \eqref{mainmse} along the direction $\bm{\Delta}$ in \eqref{msed1}  is
\begin{align}        
	&\nabla^{2}f^{M}(\bm{W},\bm{H},\bm{b})\left[\bm{\Delta},\bm{\Delta}\right]\nonumber\\ 
	=&
	\frac{1}{N}	\|\bm{\Delta_{W}H+W\Delta_{H}}+\bm{\Delta_{b}1}_{N}^{\top}\|_{F}^{2} \nonumber\\
	&+2\trace\left(\frac{1}{N}\left(\bm{WH- \widetilde{Y}}\right)^{\top}\bm{\Delta_{W}\Delta_{H}}\right)\nonumber\\
	&+\lambda_{\bm{W}}\|\bm{\Delta_{W}}\|_{F}^{2}
	+\lambda_{\bm{H}}\|\bm{\Delta_{H}}\|_{F}^{2}, \label{msehess1}
	+\lambda_{\bm{b}}\|\bm{\Delta_{b}}\|_{2}^{2}.
\end{align}
Direct calculation shows that the direction 
$\bm{\Delta}$ in \eqref{msed1} satisfies the following properties
\begin{equation}\label{eq:delta11}
	\bm{\Delta_{W}H=0},\quad \bm{W\Delta_{H}=0},
	\quad \bm{\Delta_{b}1}_{N}^{\top}=\bm{0} ,\quad
	\|\bm{\Delta_{W}H+W\Delta_{H}}+\bm{\Delta_{b}1}_{N}^{\top}\|_{F}^{2}=0,
\end{equation}
\begin{equation}\label{eq:delta12}
\trace
\left(\frac{1}{N}\left(\bm{WH-\widetilde{Y}}\right)^{\top}\bm{\Delta_{W}\Delta_{H}}\right)
=
\trace
\left(\frac{
{\|\bm{\alpha}\|_{2}^{2}}
}{N}\left(\bm{WH-\widetilde{Y}}\right)^{\top}\bm{uv}^{\top}\right)
\end{equation}
and
\begin{equation}\label{eq:delta13}
	\lambda_{\bm{W}}\|\bm{\Delta_{W}}\|_{F}^{2}+\lambda_{\bm{H}}\|\bm{\Delta_{H}}\|_{F}^{2}+\lambda_{\bm{b}}\|\bm{\Delta_{b}}\|_{2}^{2}=2\|\bm{\alpha}\|_{2}^{2}\sqrt{\lambda_{\bm{W}}\lambda_{\bm{H}}}.
\end{equation}
We construct $\widetilde{\bm{\alpha}} = \bm{V_W}^{\top}\bm{\alpha}$ such that 
$$
\widetilde{\bm{W}} 
\widetilde{\bm{\alpha}} 
= 
\bm{W} \bm{V_W} \bm{V_W}^{\top}
\bm{\alpha} 
= 
\bm{W}\bm{\alpha}
=\bm{0},\quad 
\widetilde{\bm{\alpha }}^{\top}\widetilde{\bm{H}}=  \bm{\alpha}^{\top} \bm{V_W} \bm{V_W}^{\top} \bm{H} = \bm{\alpha}^{\top}  \bm{H}=\bm{0}.
$$
Similarly, the direction  given by 
\begin{equation*}\label{msed2}
\widetilde{\bm{\Delta}}
=
\left(
\bm{\widetilde{\Delta}_{\widetilde{\bm{W}}}} ,\bm{\widetilde{\Delta}_{\widetilde{\bm{H}}}},\bm{\widetilde{\Delta}}_{\bm{b}}
\right)
=\left(
\left(
\frac{\lambda_{\bm{H}}}{\lambda_{\bm{W}}}\right)^{\frac{1}{4}}\bm{u}\widetilde{\bm{\alpha }}^{\top},
\left(\frac{\lambda_{\bm{H}}}{\lambda_{\bm{W}}}\right)^{-\frac{1}{4}}\widetilde{\bm{\alpha }} \bm{v}^{\top},
\bm{0}
\right)
\end{equation*}
satisfies 
\begin{equation}\label{eq:delta21}
\|
\widetilde{\bm{\Delta}}_{\widetilde{\bm{W}}}  
\widetilde{\bm{H}}
+ 
\widetilde{\bm{W}} 
\widetilde{\bm{\Delta}}_{\widetilde{\bm{H}}}
+
\bm{\Delta_{b}1}_{N}^{\top}
\|_{F}^{2}
=0,
\end{equation}
\begin{equation}\label{eq:delta22}
\trace
\left(\frac{1}{N}
\left(\bm{\widetilde{W}\widetilde{H}-\widetilde{Y}}\right)^{\top}  
\widetilde{\bm{\Delta}}_{\widetilde{\bm{W}}}
\widetilde{\bm{\Delta}}_{\widetilde{\bm{H}}}
\right)
=
\trace
\left(\frac{
{\|\bm{\alpha}\|_{2}^{2}}
}{N}\left(\bm{\widetilde{W}\widetilde{H}-\widetilde{Y}}\right)^{\top}\bm{uv}^{\top}\right)
\end{equation}
and
\begin{equation}\label{eq:delta23}
\lambda_{\bm{W}}\|\widetilde{\bm{\Delta}}_{\widetilde{\bm{W}}}  \|_{F}^{2}
+
\lambda_{\bm{H}}\|\widetilde{\bm{\Delta}}_{\widetilde{\bm{H}}}  \|_{F}^{2}
+
\lambda_{\bm{b}}\|\widetilde{\bm{\Delta}}_{{\bm{b}}}  \|_{2}^{2}
=
2\|\bm{\alpha}\|_{2}^{2}\sqrt{\lambda_{\bm{W}}\lambda_{\bm{H}}}.
\end{equation}
Substituting
\eqref{eq:delta11}, \eqref{eq:delta12}, \eqref{eq:delta13} into \eqref{msehess1}, followed by  substituting \eqref{eq:delta21}, \eqref{eq:delta22}, \eqref{eq:delta23} into \eqref{msehess1}, together with \eqref{eq:oeqno}, we obtain 
\begin{equation}\label{eq:c2}
\nabla^{2}f^{M}\left(\bm{W},\bm{H},\bm{b}\right)
\left[
{\bm{\Delta}},{\bm{\Delta}}
\right]
=
\nabla^{2}f^{M}\left(\bm{\widetilde{W}},\bm{\widetilde{H}},\bm{b}\right)
\left[
\widetilde{\bm{\Delta}},\widetilde{\bm{\Delta}}
\right].
\end{equation}
Conversely, suppose that there exists a non-zero vector $\widetilde{\bm{\alpha}}\in\mathbb{R}^{K}$ satisfying
$$
\widetilde{\bm{W}} \widetilde{\bm{\alpha}} 
=\bm{0}
,\quad 
\widetilde{\bm{\alpha }}^{\top}\widetilde{\bm{H}}= \bm{0}.
$$
Then we can construct  $\bm{\alpha } =\bm{V_{W}} \widetilde{\bm{\alpha}}  $. Using a similar argument, we can show \eqref{eq:c2} holds.
In summary, the critical point $(\bm{W},\bm{H},\bm{b})$ and the critical point  $(\bm{\widetilde{W}},\bm{\widetilde{H}},\bm{b})$ have the same Hessian information.

\end{proof}

In accordance with Proposition \ref{mselm1} and Proposition \ref{prop2}, the following assumptions are applied to $(\bm{W},\bm{H},\bm{b})$  
in the rest of this section without loss of generality.
\begin{assumption}\label{ass1}
 We assume that the critical point $\left(\bm{W},\bm{H},\bm{b}\right)$ of \eqref{mainmse} satisfies the following two conditions:
\begin{enumerate}

	\item The matrix $\bm{W}$ and $\bm{H}$ are in the form
	\begin{equation}\label{o}
	\bm{W}=
	\begin{pmatrix}
	\bm{\widehat{W}} &  \bm{0}
	\end{pmatrix},\quad
	\bm{H}= \begin{pmatrix}
	\bm{\widehat{H}}  \\
	\bm{0} 
	\end{pmatrix},
	\end{equation}
	where the columns of $\bm{\widehat{W}}$ are orthogonal and the rows of $\bm{\widehat{H}}$ are orthogonal. The zeros in $\bm{W}$ and $\bm{H}$ exist when $\rank(\bm{W})=\rank(\bm{H})\leq K-1$. 
 \item For all $j \in [\rank(\bm{W})]$, $\bm{w}_j\neq \bm{0}$ and $\bm{h}^j\neq \bm{0}$.
\end{enumerate}
\end{assumption}
For any  critical point $\left(\bm{W},\bm{H},\bm{b}\right)$ of \eqref{mainmse} satisfying 
Assumption \ref{ass1},
we  decompose \eqref{oeq1}   for all  columns of $\bm{W}$ and rows of $\bm{H}$ into 
 \begin{align}\label{od1}
&\left(\frac{\lambda_{\bm{W}}}{\lambda_{\bm{H}}}\|\bm{w}_{j}\|^{2}_{2}+N\lambda_{\bm{W}}\right)\bm{w}_{j}
=\bm{\widetilde{Y}}(\bm{h}^{j})^{\top},
\nonumber \\ 
&\left(\frac{\lambda_{\bm{H}}}{\lambda_{\bm{W}}}\|(\bm{h}^{j})^{\top}\|^{2}_{2}+N\lambda_{\bm{H}}\right)(\bm{h}^{j})^{\top}=\bm{\widetilde{Y}}^{\top}\bm{w}_{j},\quad j \in [\rank(\bm{W})].
\end{align}
Plugging the equation \eqref{eq:fd}  into \eqref{od1} gives 
$$
\left(\frac{\sqrt{\lambda_{\bm{W}}}}{\sqrt{\lambda_{\bm{H}}}}\|\bm{w}_{j}\|_{2}^{2}
+N\sqrt{\lambda_{\bm{W}}\lambda_{\bm{H}}}\right)\frac{\bm{w}_{j}}{\|\bm{w}_{j}\|_{2}}
=\bm{\widetilde{Y}}\frac{(\bm{h}^{j})^{\top}}{\|(\bm{h}^{j})^{\top}\|_{2}},$$
$$
\left(\frac{\sqrt{\lambda_{\bm{W}}}}{\sqrt{\lambda_{\bm{H}}}}\|\bm{w}_{j}\|_{2}^{2}
+N\sqrt{\lambda_{\bm{W}}\lambda_{\bm{H}}}\right)\frac{(\bm{h}^{j})^{\top}}{\|(\bm{h}^{j})^{\top}\|_{2}}
=\bm{\widetilde{Y}}^{\top}\frac{\bm{w}_{j}}{\|\bm{w}_{j}\|_{2}} \quad  j \in [\rank(\bm{W})].
$$
Hence,   
\begin{equation}\label{mseeq:SVD1}
\sigma_{j}=\frac{\sqrt{\lambda_{\bm{W}}}}{\sqrt{\lambda_{\bm{H}}}}\|\bm{w}_{j}\|_{2}^{2}+N\sqrt{\lambda_{\bm{W}}\lambda_{\bm{H}}}
\end{equation}
are singular values of $\bm{\widetilde{Y}}$ with the left and right singular vectors 
\begin{equation}\label{mseeq:SVD2}
\bm{u}_{j}=\frac{\bm{w}_{j}}{\|\bm{w}\|_{2}},\quad \bm{v}_j=\frac{(\bm{h}^{j})^{\top}}{\|(\bm{h}^{j})^{\top}\|_{2}},\quad j \in [\rank(\bm{W})].
\end{equation}
On the other hand, it follows from \eqref{eq:fd}, \eqref{mseeq:SVD1} and \eqref{mseeq:SVD2} 
that
$$
\bm{w}_{j}\bm{h}^{j}
=\|\bm{w}_{j}\|_{2}^{2}\frac{\bm{w}_{j}}{\|\bm{w}_{j}\|_{2}}\frac{\bm{h}^{j}}{\|\bm{w}_{j}\|_{2}}
=\left(\sigma_{j}-N\sqrt{\lambda_{\bm{W}}\lambda_{\bm{H}}}\right)\bm{u}_{j}\bm{v}_j^T, \quad j \in [\rank(\bm{W})].
$$ 
This leads to
\begin{equation}\label{eq:WH}
\bm{WH} = \sum_{j=1}^{\rank(\bm{W})}
\bm{w}_{j}\bm{h}^{j}
=\sum_{j=1}^{\rank(\bm{W})}\left(\sigma_{j}-N\sqrt{\lambda_{\bm{W}}\lambda_{\bm{H}}}\right)\bm{u}_{j}\bm{v}_j^{\top}.
\end{equation}    

Since $\rank(\bm{W}) 
\leq \rank(\widetilde{\bm{Y}})$, we prove Theorem \ref{thm3} in two separate cases.
\begin{proof}[Proof of Theorem \ref{thm3}]
The first case is that 
$\rank(\bm{W})=\rank(\bm{\widetilde{Y}}).$
It follows from  \eqref{mseeq:SVD1}, \eqref{mseeq:SVD2} and \eqref{eq:WH}
that
\begin{equation*}\label{eq:dec}
\bm{WH-\widetilde{Y}}
=-\sum_{j=1}^{\rank(\bm{W})}N\sqrt{\lambda_{\bm{W}}\lambda_{\bm{H}}}\bm{u}_{j}\bm{v}_j^{\top},
\end{equation*}
which leads to 
\begin{equation}\label{eq:optimal1}
\|\bm{WH-\widetilde{Y}}\| = N\sqrt{\lambda_{\bm{W}}\lambda_{\bm{H}}}.
\end{equation} 
Since the equality \eqref{eq:optimal1} satisfies the condition in {Proposition  \ref{mselm2}},    
the critical point $\left(\bm{W},\bm{H},\bm{b}\right)$ is a global minimizer of \eqref{mainmse}.

The second case is that $\rank(\bm{W})<\rank(\widetilde{\bm{Y}})$. 
Then there  exist  singular values of $\bm{\widetilde{Y}}$ denoted by  $\sigma'_{i} $, which can not be covered by $\bm{u}_j$ and $\bm{v}_j$ 
in \eqref{mseeq:SVD2}, i.e.,
\begin{equation}
\langle \bm{u}'_{i} , \bm{u}_j \rangle=0,\quad
\langle \bm{v}'_{i} , \bm{v}_j \rangle=0,\quad j\in [\rank(\bm{W})]
\end{equation}
where $\bm{u}'_{i}$ and $\bm{v}'_{i}$ are
the  left and right singular value vectors to $\sigma'_{i}$. Therefore,
\begin{equation}\label{eq:uv}
\trace
\left(
\left(\bm{WH-\widetilde{Y}}\right)^{\top}
\bm{u}'_{i} (\bm{v}'_{i})^{\top}
\right)
=-\sigma'_{i}.
\end{equation}
We only need to consider that
the critical point $\left(\bm{W},\bm{H},\bm{b}\right)$ is not a global minimizer of \eqref{mainmse}.
Then Proposition  \ref{mselm2}
implies that there exists at least one singular value $\sigma'_{i^{\star}}$  satisfying \eqref{eq:uv} and 
\begin{equation}\label{eq:sigmastar2}
\sigma'_{i^{\star}}>N\sqrt{\lambda_{\bm{W}}\lambda_{\bm{H}}}.
\end{equation}
Let  $\bm{u}'_{i^{\star}}$ and $\bm{v}'_{i^{\star}}$ denote
the left and right singular value vectors 
corresponding to the singular
value $\sigma'_{i^{\star}}$. 
Since $\rank(\bm{W})<\rank(\bm{Y})$ implies that $\rank(\bm{W})<K$,
 there  exists a non-zero unit vector $\bm{\alpha}\in\mathbb{R}^{K}$ such that 
\begin{equation}\label{eq:a}
\bm{W\alpha=0},\quad  \bm{\alpha}^{\top}\bm{H=0}.
\end{equation}
Using $\bm{u}'_{i^{\star}}$, $\bm{v}'_{i^{\star}}$ and $\bm{\alpha}$ in \eqref{eq:a}, 
we  construct a strictly negative curvature direction for the critical point  $(\bm{W},\bm{H},\bm{b})$ as
 \begin{equation}\label{msed}
     \bm{\Delta}
     =
     \left(\bm{\Delta_{W}},\bm{\Delta_{H}},\bm{\Delta_{b}}\right)
     =	
     \left(
	\left(\frac{\lambda_{\bm{H}}}{\lambda_{\bm{W}}}\right)^{\frac{1}{4}}\bm{u}'_{i^{\star}}\bm{\alpha}^{\top},
	\left(\frac{\lambda_{\bm{H}}}{\lambda_{\bm{W}}}\right)^{-\frac{1}{4}}\bm{\alpha } (\bm{v}'_{i^{\star}})^{\top},
	\bm{0}
	\right).
 \end{equation}
It follows from \eqref{eq:a} and \eqref{msed} that 
 \begin{equation}\label{eq:newdelta11}
	\|\bm{\Delta_{W}H+W\Delta_{H}}+\bm{\Delta_{b}1}_{N}^{\top}\|_{F}^{2}=0
\end{equation}  
and 
\begin{equation}\label{eq:newdelta13}
	\lambda_{\bm{W}}\|\bm{\Delta_{W}}\|_{F}^{2}+\lambda_{\bm{H}}\|\bm{\Delta_{H}}\|_{F}^{2}+\lambda_{\bm{b}}\|\bm{\Delta_{b}}\|_{2}^{2}
	=2\sqrt{\lambda_{\bm{W}}\lambda_{\bm{H}}}.
\end{equation}
Substituting
\eqref{eq:uv}, 
\eqref{eq:newdelta11} and \eqref{eq:newdelta13} into \eqref{msehess1}, together with \eqref{eq:sigmastar2}, we attain
 $$
 \nabla^{2}f^{M}(\bm{W},\bm{H},\bm{b})\left[\bm{\Delta},\bm{\Delta}\right]
 =-\frac{2}{N}\left(\sigma'_{i^{\star}}-N\sqrt{\lambda_{\bm{W}}\lambda_{\bm{H}}}\right)<0.
 $$ 
The above inequality implies that the critical point $(\bm{W},\bm{H},\bm{b})$ of \eqref{mainmse}, which is not a local minimizer, is a strict saddle with negative curvature.

\end{proof}

\section*{Acknowledgments}

The author would like to thank Professors B. Dong
for the introduction to 
neural collapse. 
This work is supported by NSFC  under Grant number 	 12371101.

\bibliographystyle{plain}
\bibliography{references2}

\end{document}